\documentclass{article}

\usepackage[utf8]{inputenc}
\usepackage[english]{babel}
\usepackage[T1]{fontenc}

\usepackage{amsthm}
\usepackage{mathrsfs}
\usepackage{fancyhdr}
\usepackage[colorlinks,linkcolor=red,citecolor=blue,urlcolor=black,filecolor=blue,backref=page]{hyperref}
\usepackage{graphicx}
\usepackage{amsmath}
\usepackage{amssymb}
\usepackage{dsfont}

\usepackage{algorithm}
\usepackage{algpseudocode}

\usepackage{natbib}

\usepackage{authblk}
\usepackage{fullpage}

\usepackage{subcaption} 

\theoremstyle{plain}
\newtheorem{theorem}{Theorem}[section]
\newtheorem{lemma}[theorem]{Lemma}

\newtheorem{proposition}[theorem]{Proposition}

\theoremstyle{remark}

\newcommand{\prob}{\mathbb{P}}
\newcommand{\mean}{\mathbb{E}}
\newcommand{\Var}{\mathbb{V}}

\newcommand{\cov}{\text{cov}}

\newcommand{\ud}{\mathop{}\!\mathrm{d}}
\newcommand{\cond}{\,|\,}
\newcommand{\mcond}{\,\middle|\,}

\newcommand{\e}{e} 

\DeclareMathOperator{\Normal}{\mathcal{N}}

\DeclareMathOperator{\GP}{\mathcal{GP}}

\newcommand{\Ba}{\mathbf{a}}

\newcommand{\Bx}{\mathbf{x}}

\newcommand{\BB}{\mathbf{B}}

\newcommand{\BS}{\mathbf{S}}

\newcommand{\BSigma}{\boldsymbol{\Sigma}}

\newcommand{\Btheta}{\boldsymbol{\theta}}
\newcommand{\Bvartheta}{\boldsymbol{\vartheta}}

\newcommand{\Bphi}{\boldsymbol{\phi}}

\renewcommand{\epsilon}{\varepsilon}

\newcommand{\reals}{\mathbb{R}}

\newcommand{\onevector}{\mathds{1}}

\newcommand{\Bzeros}{\mathbf{0}}
\newcommand{\Id}{\mathbf{I}}

\newcommand{\myquad}{\quad}
\newcommand{\idxt}{1:t}
\newcommand{\idxtp}{1:t+1}
\newcommand{\pacc}{p_{a}}
\newcommand{\Deltav}{\tau}

\newcommand{\maxvar}{maxvar}

\newcommand{\randmaxvar}{rand\_maxvar}

\newcommand{\expintvar}{expintvar}
\newcommand{\expdiffvar}{expdiffvar}

\newcommand{\bigO}{\mathcal{O}} 

\newcommand{\probmeas}{\Pi}

\newcommand{\piabcnoeps}{\pi^{\text{ABC}}}
\newcommand{\piabc}{\pi^{\text{ABC}}_{\varepsilon}}
\newcommand{\tildepiabc}{\tilde{\pi}^{\text{ABC}}_{\varepsilon}}
\newcommand{\hatpiabc}{\widehat{\pi}^{\text{ABC}}_{\varepsilon}}
\newcommand{\pitrue}{\pi^{\text{true}}}
\newcommand{\tildepiNabc}{\tilde{\pi}^{\text{ABC}}_{\text{N}}}

\usepackage{color} 
\newcommand{\revcol}{black}

\usepackage{booktabs} 



\def\app#1#2{%
  \mathrel{%
    \setbox0=\hbox{$#1\sim$}%
    \setbox2=\hbox{%
      \rlap{\hbox{$#1\propto$}}%
      \lower1.1\ht0\box0%
    }%
    \raise0.25\ht2\box2%
  }%
}
\def\approxprop{\mathpalette\app\relax}

\newcommand{\indic}{\mathds{1}}



\title{Efficient acquisition rules for model-based approximate Bayesian computation} 

\author[1]{Marko J\"{a}rvenp\"{a}\"{a}}
\author[2]{Michael U. Gutmann}
\author[1]{Arijus Pleska}
\author[1]{Aki Vehtari}
\author[1]{Pekka Marttinen}

\affil[1]{Helsinki Institute for Information Technology HIIT, Department of Computer Science, Aalto University}
\affil[2]{School of Informatics, University of Edinburgh}

%

\begin{document}

\maketitle

\begin{abstract}

Approximate Bayesian computation (ABC) is a method for Bayesian inference when the likelihood is unavailable but simulating from the model is possible. However, many ABC algorithms require a large number of simulations, which can be costly. To reduce the computational cost, Bayesian optimisation (BO) and surrogate models such as Gaussian processes have been proposed. 
Bayesian optimisation enables one to intelligently decide where to evaluate the model next 
{\color{\revcol} but common BO strategies are not designed for the goal of estimating the posterior distribution. }
Our paper addresses this gap in the literature. 
We propose to compute the uncertainty in the ABC posterior density, which is due to a lack of simulations to estimate this quantity accurately, and define a loss function that measures this uncertainty. We then propose to select the next evaluation location to minimise the expected loss. 
Experiments show that the proposed method often produces the most accurate approximations as compared to common BO strategies. 


\end{abstract}

\section{Introduction} \label{sec:intro}

We consider the problem of Bayesian inference of some unknown parameter $\Btheta \in \Theta \subset \reals^p$ of a simulation model. Such models are typically not amenable to any analytical treatment but they can be simulated with any parameter $\Btheta \in \Theta$ to produce data $\Bx_{\Btheta} \in \mathcal{X}$. 
Simulation models are also called simulator-based or implicit models \citep{Diggle1984}. 
Our prior knowledge about the unknown parameter $\Btheta$ is represented by the prior probability density $\pi(\Btheta)$ and the goal of the analysis is to update our knowledge about the parameters $\Btheta$ after we have observed data $\Bx_{obs}\in \mathcal{X}$.

If evaluating the likelihood function $\pi(\Bx\cond\Btheta)$ is feasible, the posterior distribution can be computed directly using Bayes' theorem 
\begin{align}
\pi(\Btheta \cond \Bx_{obs}) 
&= \frac{\pi(\Btheta)\pi(\Bx_{obs}\cond\Btheta)}{\int_{\Theta} \pi(\Btheta')\pi(\Bx_{obs}\cond\Btheta') \ud \Btheta'} 
\propto \pi(\Btheta)\pi(\Bx_{obs}\cond\Btheta). \label{eq:bayes}
\end{align}
{\color{\revcol} In this article we focus on simulation models that have intractable likelihoods. This means that} one can only simulate from the model, that is, draw samples $\Bx_{\Btheta} \sim \pi(\cdot\cond\Btheta)$, but not evaluate the likelihood function $\pi(\Bx_{obs}\cond\Btheta)$ {\color{\revcol} at all so that} the standard Bayesian approach cannot be used. 
%
{\color{\revcol} For example, possibly high-dimensional unobservable latent random quantities present in the simulation model can make evaluating the likelihood impossible. Such difficulties occur in many areas of science, and typical application fields include population genetics \citet{Beaumont2002,Numminen2013}, genomics \citet{Marttinen2015,Jarvenpaa2016}, ecology \citep{Wood2010,Hartig2011} and psychology \citet{Turner2012}, see e.g.~\citet{Lintusaari2016} and references therein for further examples. 
}

Approximate Bayesian computation (ABC) replaces likelihood evaluations with model simulations\footnote{{\color{\revcol}Such approaches are also called likelihood-free in the literature although this name can be considered a misnomer. Namely, while the user does not need to provide the likelihood of the simulator model, many methods construct some sort of likelihood approximation implicitly or explicitly.}}, see e.g.~\citet{Marin2012,Turner2012,Lintusaari2016} for an overview.  
The main idea of the basic ABC algorithm is to draw a parameter value from the prior distribution, simulate a data set with the given parameter value, and accept the value as a draw from the (approximate) posterior if the {discrepancy} between the simulated and observed data is small enough.
This algorithm produces samples from the approximate posterior distribution
\begin{align}
\piabc(\Btheta \cond \Bx_{obs}) \propto \pi(\Btheta)\int \pi_{\epsilon}(\Bx_{obs}\cond\Bx) \pi(\Bx\cond\Btheta) \ud \Bx, \label{eq:abc_post}
\end{align}
where $\pi_{\epsilon}(\Bx_{obs}\cond\Bx) \propto \indic_{\Delta(\Bx_{obs},\Bx) \leq \epsilon}$, although other choices of $\pi_{\epsilon}$ are also possible \citep{Wilkinson2013}. The function $\Delta: \mathcal{X} \times \mathcal{X} \rightarrow \reals_{+}$ is the discrepancy that tells how different the simulated and observed data sets are, and it is often formed by combining a set of summary statistics 
{\color{\revcol} even though, occasionally, the output data have a relatively small dimension and the data sets can be compared directly. Sometimes the discrepancy function may be available from previous analyses with similar models or can be constructed based on expert knowledge of the application field. 
The discrepancy and the summaries affect the approximations and their choice is an active research topic \citep{Blum2013,Fearnhead2012, Gutmann2017}. In this work, we are concerned with another, equally important, research question, namely given a suitable discrepancy function, how to perform the inference in a computationally efficient manner. 
}
%

The threshold $\epsilon$ controls the trade-off between the accuracy of the approximation and computational cost: a small $\epsilon$ yields accurate approximations but requires more simulations, see e.g.~\citet{Marin2012}.
Given $t$ samples from the model for some $\Btheta$, that is, $\Bx^{(i)}_{\Btheta} \sim \pi(\cdot\cond\Btheta)$ for $i=1,\ldots,t$, the value of the ABC posterior in Equation (\ref{eq:abc_post}) can be estimated as
\begin{align}
\piabc(\Btheta \cond \Bx_{obs}) \approxprop \pi(\Btheta) \sum_{i = 1}^{t}  \pi_{\epsilon}(\Bx_{obs}\cond\Bx^{(i)}_{\Btheta}), \label{eq:abc_post_approx}
\end{align}
%
where ``$\approxprop$'' means that the left-hand side is approximately proportional to the right-hand side {\color{\revcol} and where the extra approximation is due to replacing the integral with the Monte-Carlo sum}.


Algorithms based on Markov Chain Monte Carlo and sequential Monte Carlo have been used to improve the efficiency of ABC 
as compared to the basic rejection sampler \citep{Marjoram2003,Sisson2007,Beaumont2009,Toni2009,Marin2012,Lenormand2013}. 
Unfortunately, the sampling based methods still require a very large number of simulations. In this paper we focus on the challenging scenario where the number of available simulations is limited, e.g.~to fewer than a thousand, rendering these sampling-based ABC methods infeasible. 
Different modelling approaches have also been proposed to reduce the number of simulations required. For example, in the synthetic likelihood method summary statistics are assumed to follow the Gaussian density \citep{Wood2010,Price2016} and the resulting likelihood approximation can be used together with MCMC {\color{\revcol} but evaluating the synthetic likelihood is typically still very expensive. }
\citet{Wilkinson2014,Meeds2014,Jabot2014,Kandasamy2015,Drovandi2015b,Gutmann2015,Jarvenpaa2016} all use Gaussian processes (GP) to accelerate ABC in various ways. Some other alternative approaches are considered by \citet{Fan2013,Papamakarios2016}. 
%
Also, \citet{Beaumont2002,Blum2010,Blum2010b} have used modelling as a post-processing step to correct the approximation error of the nonzero threshold.

While probabilistic modelling has been used to accelerate ABC inference, and strategies have been proposed for selecting which parameter to simulate next, little work has focused on trying to quantify the amount of uncertainty in the estimator of the ABC posterior density under the chosen modelling assumptions. This uncertainty is due to a finite computational budget to perform the inference and could be thus also called as ``computational uncertainty''. 
Consequently, little has been done to design strategies that directly aim to minimise this uncertainty. 
%
To our knowledge, only \citet{Kandasamy2015} have used the uncertainty in the likelihood function to propose new simulation locations in a query-efficient way, but they assumed that the likelihood can be evaluated, although with high a computational cost. Also, \citet{Wilkinson2014} modelled the uncertainty in the likelihood to rule out regions with negligible posterior probability. \citet{Rasmussen2003} used GP regression to accelerate Hybrid Monte Carlo but did not consider the setting of ABC. \citet{Osborne2012} developed an active learning scheme to select evaluations to estimate integrals such as the model evidence under GP modelling assumptions, however, their approach is designed for estimating this particular scalar value. 
%
Finally, \citet{Gutmann2015} proposed Bayesian optimisation to efficiently select new evaluation locations. {\color{\revcol} While the BO strategies they used to illustrate the framework worked reasonably, their approach does not directly address the goal of ABC, that is to learn the posterior accurately. } 

In this article we propose an acquisition function for selecting the next evaluation location tailored specifically for ABC. The acquisition function measures the expected uncertainty in the estimate of the (unnormalised) ABC posterior density function over a future evaluation of the simulation model, and proposes the next simulation location so that this expected uncertainty is minimised. We also consider some variants of this strategy. 
%
More scecifically, in Section \ref{sec:problem} we formulate our probabilistic approach on a general level. In Section \ref{sec:gp_models} we propose a particular algorithm, based on modelling the discrepancy with a GP. Section \ref{sec:experiments} contains experiments. Some additional details of our algorithms are discussed in Section \ref{sec:discussion} and Section \ref{sec:conclusions} concludes the article. Technical details and additional experiments are presented in the supplementary material. 

\section{Problem formulation} \label{sec:problem}


We start by presenting the main idea of the probabilistic framework for query-efficient ABC inference. Suppose we have training data $D_{\idxt} = \{(\Bx_i,\Btheta_i)\}_{i=1}^t$ of simulation outputs $\Bx_i\in\mathcal{X}$ that were generated by simulating the model with parameters $\Btheta_i$. Suppose also that we have a Bayesian model that describes our uncertainty about the future simulation output $\Bx^{*} \in \mathcal{X}$ with parameter $\Btheta^{*}$ conditional on the training data $D_{\idxt}$. This uncertainty is represented by a probability measure $\probmeas(\ud \Bx^{*} \cond \Btheta^{*}, D_{\idxt})$\footnote{{\color{\revcol} We use $\probmeas(\cdot)$ to denote the probability measure of a random quantity that can be interpreted from the argument. Similarly, $\pi(\cdot)$ denotes a probability density function whenever the corresponding random vector is assumed to be absolutely continuous.}}. 
Instead of modelling the full data $\Bx^{*}\in\mathcal{X}$, we note that in practice it is reasonable to model only some summary statistics $s(\Bx^{*})\in S \subset \reals^r$. Alternatively, the discrepancy between the observed data and simulator output can be modelled as is done later in this article.
Importantly, our estimate for the ABC posterior probability density function $\piabcnoeps$ actually depends on the training data if e.g.~Equation (\ref{eq:abc_post_approx}) is used, and can therefore also be considered a random quantity. Given the training data $D_{\idxt}$, we assume that, using our Bayesian model, we can represent the uncertainty in $\piabcnoeps$ using a probability measure $\probmeas(\ud \piabcnoeps \cond D_{\idxt})$ over the space of (suitable smooth) density functions $\piabcnoeps : \Theta \rightarrow \reals_{+}$, {\color{\revcol} where the probability measure $\probmeas$ now describes the uncertainty in the ABC posterior.} 

If the amount of available simulations is limited due to a high computational cost, we may have considerable uncertainty of the ABC posterior $\piabcnoeps$. Let $\mathcal{L}_{\piabcnoeps}(D_{1:t})$ denote the loss due to our uncertainty about the ABC posterior density. This loss function could, for example, measure overall uncertainty in the probability density $\piabcnoeps$ or the uncertainty of a particular point estimate of interest such as the posterior mean. {\color{\revcol} In the latter case, for a scalar $\theta$, we could choose $\mathcal{L}_{\piabcnoeps}(D_{1:t}) = \Var(\int_{\Theta} \theta \piabcnoeps(\theta) \ud \theta \cond D_{1:t})$, where the variance (assuming it exists) is taken with respect to the probability measure $\probmeas(\ud \piabcnoeps \cond D_{\idxt})$. }
%

We consider the sequential setting where, at each iteration, we need to decide the next evaluation location. After each iteration, we can compute the uncertainty in the ABC posterior and the corresponding loss function, and fit a model that predicts the next simulation output, given all data available at the time. 
{\color{\revcol} Our aim is to choose the next evaluation location $\Btheta^{*} = \Btheta_{t+1}$ such that the expected loss, after having simulated the model at this location, is minimised.} 
That is, we want to minimise
\begin{align}
&\mean_{\Bx^{*} \cond \Btheta^{*}, D_{\idxt}}(\mathcal{L}_{\piabcnoeps}(D_{\idxt} \cup \{\Bx^{*}, \Btheta^{*}\})) 
%
= \int \mathcal{L}_{\piabcnoeps}(D_{\idxt} \cup \{\Bx^{*}, \Btheta^{*}\}) \probmeas(\ud \Bx^{*} \cond \Btheta^{*}, D_{\idxt}) \label{eq:loss}
\end{align}
%
with respect to $\Btheta^{*}$, where we need to average over unknown unknown simulator {\color{\revcol} output $\Bx^{*} = \Bx_{t+1}$ at parameter $\Btheta^{*}$ using the model for the new simulator output $\probmeas(\ud \Bx^{*} \cond \Btheta^{*}, D_{\idxt})$.}
If the loss function measures the uncertainty of the ABC posterior density, then this approach is{\color{\revcol}, by construction, a one step-ahead} Bayes optimal solution to a decision problem of minimising the expected uncertainty 
and offers thus a query-efficient approach to determine the {\color{\revcol} next evaluation location.}

This approach resembles the entropy search (ES) method \citep{Hennig2012,HernandezLobato2014}. Other related approaches have been proposed by \citet{Wang2016,Bijl2016,Wang2017}. Different from these approaches, our main goal is to select the parameter for a future run of the costly simulation model so that the uncertainty in the approximate posterior is minimised. ES, in contrast, is designed for query-efficient global optimisation and it aims to find a parameter value that maximises the objective function, and to minimise the uncertainty related to this maximiser. 
%
We note that the rationale of our approach is essentially the same as in probabilistic numerics literature (see e.g.~\citet{Hennig2015}) or in sequential Bayesian experimental design (see \citet{Ryan2016} for a recent survey). 
However, different from these approaches, our interest is to design the evaluations to minimise the uncertainty in a quantity that itself describes the uncertainty of the parameters of a costly simulation model. 
The uncertainty in the former is due to a limited budget for model simulations that we can control, while the uncertainty in the latter is caused by noisy observations that have already been provided to us and are considered here as fixed.

The framework outlined above requires some modelling choices and can lead to computational challenges as the selection of the future evaluation location itself can require costly evaluations (as is the case of ES). 
In the following section we propose an efficient algorithm based on a loss function that measures the uncertainty in the (unnormalised) ABC posterior over the parameter space and a GP surrogate model. 
We also consider some alternative strategies that are more heuristic but easier to evaluate. 
{\color{\revcol} While our approach can be extended to a batch setting where multiple acquisitions are computed in parallel, in this article we restrict our discussion to the sequential case.}
We note that the outlined strategy is ``myopic'', meaning that the expected uncertainty after the {\color{\revcol} next evaluation} is considered only, and the number of simulations left in a limited budget is not taken into account, see e.g.~\citet{gonzalez2015glasses} for some discussion in BO context; non-myopic strategies are {\color{\revcol}also} beyond the scope of this work.

Details of our approach appear in the next section, but the main idea is illustrated in Figure \ref{fig:demo}. We model the discrepancy $\Delta_{\Btheta} = \Delta(\Bx_{obs},\Bx_{\Btheta})$ with GP regression (Figure~\ref{fig:demo}a). The ABC posterior is proportional to the prior times the probability of obtaining a discrepancy realisation that is below the threshold when the model is simulated. However, because the GP is fitted with limited training data, this probability cannot be estimated exactly, causing uncertainty in the ABC posterior density function (Figure \ref{fig:demo}b). We propose an acquisition function that selects the next evaluation location to minimise the expected variance of the (unnormalised) ABC posterior density over the parameter space. 


\begin{figure}[ht]
\centering
\includegraphics[width=0.5\textwidth]{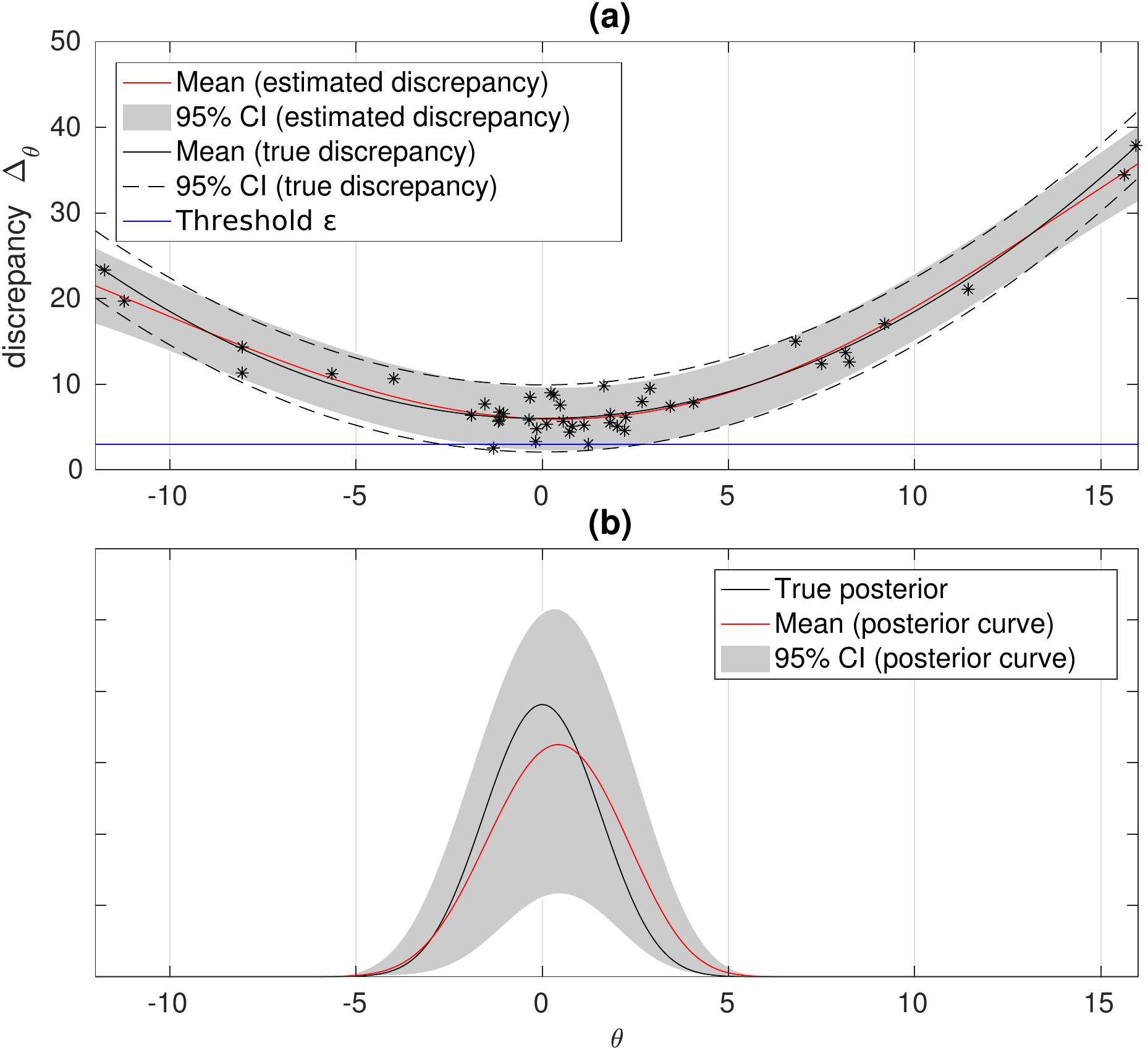}
\caption
{(a) The estimated and true discrepancy distributions are compared. Most of the evaluations are successfully chosen on the modal area of the posterior, leading to a good approximation there. 
(b) The red curve shows the mean of the unnormalised posterior density function and the grey area its 95\% pointwise credible interval.
} \label{fig:demo}
\end{figure}

\section{Nonparametric modelling and parameter acquisition} \label{sec:gp_models}





This section contains the details of our algorithms. Section \ref{subsec:gp_model} describes the GP model for the discrepancy, which permits closed-form equations for many of the required quantities to estimate the posterior, which are derived in Section \ref{subsec:uncertainty}. In Sections \ref{subsec:acquisition_rule} and \ref{subsec:alternatives} we formulate the  proposed acquisition functions, and handling uncertainty in GP hyperparameters is briefly discussed in Section \ref{subsec:gp_hypers}.

\subsection{GP model for the discrepancy}\label{subsec:gp_model}


We consider the discrepancy $\Delta_{\Btheta}$ a stochastic process indexed by $\Btheta$ i.e.~a random function of the parameter $\Btheta$. 
We assume that the discrepancy can be modelled by a Gaussian distribution\footnote{Alternatively, we could model some transformation of the discrepancy such as $\log \Delta_{\Btheta}$. In that case, the the following analysis goes similarly.}, that is~$\Delta_{\Btheta} \sim \Normal(f(\Btheta),\sigma_n^2)$ for some unknown suitably smooth function $f:\Theta\rightarrow\reals$ and variance $\sigma_n^2\in\reals_{+}$ both of which need to be estimated. We place a Gaussian process prior on $f$ so that $f \sim \GP(\mu(\Btheta),k(\Btheta,\Btheta'))$. While other choices are also possible, in this paper we consider $\mu(\Btheta) = 0$, and use the squared exponential covariance function $k(\Btheta,\Btheta') = \sigma_f^2 \exp(-\sum_{i=1}^p(\theta_i-\theta_i')^2/(2l_i^2))$. There are thus $p+2$ hyperparameters to infer, denoted by $\Bphi = (\sigma_f^2,l_1,\ldots,l_p,\sigma_n^2)$. 

Conditioned on the obtained training data $D_{\idxt} = \{(\Delta_i,\Btheta_i)\}_{i=1}^t$, which consists of realised discrepancy-parameter pairs, and the GP hyperparameters $\Bphi$, our knowledge of the function $f$ evaluated at an arbitrary point $\Btheta\in\Theta$ can be shown to be $f(\Btheta) \cond D_{\idxt}, \Btheta, \Bphi \sim \Normal(m_{\idxt}(\Btheta),v_{\idxt}^2(\Btheta))$, where 
\begin{align}
m_{\idxt}(\Btheta) &= k(\Btheta,\Btheta_{\idxt})K(\Btheta_{\idxt})^{-1}\Delta_{\idxt}, \label{eq:gp_mean}
\\
v_{\idxt}^2(\Btheta) &= k(\Btheta,\Btheta) - k(\Btheta,\Btheta_{\idxt}) K^{-1}(\Btheta_{\idxt}) k(\Btheta_{\idxt},\Btheta) \label{eq:gp_var}
\end{align}
and $K(\Btheta_{\idxt}) = k(\Btheta_{\idxt},\Btheta_{\idxt}) + \sigma_n^2\Id$. Above we defined $k(\Btheta,\Btheta_{\idxt}) = (k(\Btheta,\Btheta_1),\ldots,k(\Btheta,\Btheta_t))^T$, $k(\Btheta_{\idxt},\Btheta_{\idxt})_{ij} = k(\Btheta_i,\Btheta_j)$ for $i,j=1,\ldots,t$ and similarly for $k(\Btheta_{\idxt},\Btheta)$. We have also used $\Delta_{\idxt} = (\Delta_1,\ldots,\Delta_t)^T$. 
A comprehensive presentation of GP regression can be found in \citet{Rasmussen2006}.

\subsection{Quantifying the uncertainty of the ABC posterior estimate} \label{subsec:uncertainty}

As in \citet{Gutmann2015}, one can compute the posterior predictive density for a new discrepancy value at $\Btheta$ using $\Delta_{\Btheta} \cond D_{\idxt}, \Btheta, \Bphi \sim \Normal(m_{\idxt}(\Btheta), v_{\idxt}^2(\Btheta) + \sigma_n^2)$ and obtain a model-based estimate for the acceptance probability given the modelling assumptions and training data $D_{\idxt}$, 
as 
\begin{align}
\prob(\Delta_{\Btheta} \leq \epsilon \cond D_{\idxt}, \Btheta, \Bphi) = \Phi\left({(\epsilon - m_{\idxt}(\Btheta))}/{\sqrt{\sigma_n^2 + v_{\idxt}^2(\Btheta)}}\right),
\label{eq:p_prob}
\end{align}
where $\Phi(z) = \int_{-\infty}^z \exp(-t^2/2)\ud t / (2\pi)$ is the cdf of the standard normal distribution. 
This probability is approximately proportional to the likelihood and yields a useful point estimate of the likelihood function. We here take a different approach and explicitly exploit the fact that part of the probability mass of $\Delta_{\Btheta}$ is due to our uncertainty in the latent function $f$ and GP hyperparameters $\Bphi$. For simplicity, we first assume that the GP hyperparameters $\Bphi$ are known and discuss relaxing this assumption in a later section. Indeed, if we knew $f$, the (unnormalised) ABC posterior $\tildepiabc(\Btheta)$ and the acceptance probability $\pacc(\Btheta)$\footnote{This notation should not be confused with a probability distribution function which is always denoted with $\pi(\cdot)$. 
}
could be computed as
\begin{align}
%
\tildepiabc(\Btheta) = \pi(\Btheta) \pacc(\Btheta), \quad 
%
\pacc(\Btheta) 
= \Phi\left(\frac{\epsilon - {f}(\Btheta)}{\sigma_n}\right). 
\label{eq:correct_estim_post}
\end{align}
With a limited number of discrepancy--parameter pairs in $D_{\idxt}$ there is uncertainty in the values of the function $f$ (and in GP hyperparameters $\Bphi$) which we propose to quantify and attempt to minimise in order to accurately estimate the ABC posterior. The following result (whose proof is found in the supplementary material) allows us to compute the expectation and the variance of the unnormalised ABC posterior.
%
%
\begin{lemma} \label{lemma:p_mean_var}
Under the GP model described in Section \ref{subsec:gp_model}, the pointwise expectation and variance of $\tildepiabc$ with respect to $\probmeas(\ud f \cond D_{\idxt})$ are 
\begin{align}
%
\mean(\tildepiabc(\Btheta) \cond D_{1:t}) 
&= \pi(\Btheta) \, \Phi\left( \frac{\epsilon - m_{\idxt}(\Btheta)}{\sqrt{\sigma_n^2 + v_{\idxt}^2(\Btheta)}} \right), \label{eq:post_mean} \\
%
\begin{split}
\Var(\tildepiabc(\Btheta) \cond D_{1:t}) 
&= \pi^2(\Btheta) \Bigg[ \Phi\left(\frac{\epsilon - m_{\idxt}(\Btheta)}{\sqrt{\sigma_n^2 + v_{\idxt}^2(\Btheta)}}\right)\Phi\left(\frac{m_{\idxt}(\Btheta) - \epsilon}{\sqrt{\sigma_n^2 + v_{\idxt}^2(\Btheta)}}\right) 
\\ 
%
&\myquad- 2 T\left( \frac{\epsilon - m_{\idxt}(\Btheta)}{\sqrt{\sigma_n^2 + v_{\idxt}^2(\Btheta)}},  \frac{\sigma_n}{\sqrt{\sigma_n^2 + 2v_{\idxt}^2(\Btheta)}} \right) \Bigg],
%
\label{eq:post_var}
\end{split}
\end{align}
%
where $m_{\idxt}(\Btheta)$ and $v_{\idxt}^2(\Btheta)$ are given by Equations (\ref{eq:gp_mean}) and (\ref{eq:gp_var}), respectively, and $T(\cdot,\cdot)$ is Owen's t-function which satisfies 
\begin{equation}
T(h,a) = \frac{1}{2\pi}\int_{0}^{a}\frac{\e^{-h^2(1+x^2)/2}}{1+x^2} \ud x,
\label{eq:owens_t}
\end{equation}
for $h,a\in \reals$. 
\end{lemma}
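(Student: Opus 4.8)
The plan is to observe that, conditional on $D_{\idxt}$ and with $\Bphi$ held fixed, the only random quantity entering $\tildepiabc(\Btheta) = \pi(\Btheta)\,\Phi((\epsilon - f(\Btheta))/\sigma_n)$ is the latent value $f(\Btheta)$, which by the GP posterior is $\Normal(m_{\idxt}(\Btheta), v_{\idxt}^2(\Btheta))$. Since the prior density $\pi(\Btheta)$ is deterministic, it factors out of both moments, so that $\mean(\tildepiabc(\Btheta) \cond D_{\idxt}) = \pi(\Btheta)\,\mean[\Phi((\epsilon - X)/\sigma_n)]$ and $\Var(\tildepiabc(\Btheta) \cond D_{\idxt}) = \pi^2(\Btheta)\,\Var[\Phi((\epsilon - X)/\sigma_n)]$, where I abbreviate $X = f(\Btheta)$, $m = m_{\idxt}(\Btheta)$ and $v^2 = v_{\idxt}^2(\Btheta)$. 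It then suffices to compute the first two moments of $\Phi((\epsilon - X)/\sigma_n)$ for a single Gaussian $X \sim \Normal(m, v^2)$.

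For the first moment I would use the probabilistic representation of the normal cdf: introduce a standard normal $Z$ independent of $X$, so that $\Phi((\epsilon - X)/\sigma_n) = \prob(\sigma_n Z \leq \epsilon - X \cond X)$ and hence $\mean[\Phi((\epsilon - X)/\sigma_n)] = \prob(\sigma_n Z + X \leq \epsilon)$. As $\sigma_n Z + X \sim \Normal(m, \sigma_n^2 + v^2)$, this evaluates to $\Phi((\epsilon - m)/\sqrt{\sigma_n^2 + v^2})$, which is Equation (\ref{eq:post_mean}). For the second moment I would reuse the same idea with two independent standard normals $Z_1, Z_2$, both independent of $X$: squaring gives $\Phi((\epsilon - X)/\sigma_n)^2 = \prob(\sigma_n Z_1 + X \leq \epsilon,\; \sigma_n Z_2 + X \leq \epsilon \cond X)$, so taking the expectation over $X$ yields the rectangle probability of the bivariate Gaussian $(\sigma_n Z_1 + X,\, \sigma_n Z_2 + X)$. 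Each marginal is $\Normal(m, \sigma_n^2 + v^2)$, and because the two coordinates share the common summand $X$ their covariance is $v^2$, giving correlation $\rho = v^2/(\sigma_n^2 + v^2)$. After standardising, $\mean[\Phi((\epsilon - X)/\sigma_n)^2] = \Phi_2(\beta, \beta; \rho)$, with $\beta = (\epsilon - m)/\sqrt{\sigma_n^2 + v^2}$ and $\Phi_2(\cdot,\cdot;\rho)$ the standard bivariate normal cdf.

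The key step is then to evaluate the diagonal bivariate normal cdf via Owen's t-function, using the classical identity $\Phi_2(h, h; \rho) = \Phi(h) - 2\,T(h, \sqrt{(1-\rho)/(1+\rho)})$. Substituting $\rho = v^2/(\sigma_n^2 + v^2)$ and simplifying gives $\sqrt{(1-\rho)/(1+\rho)} = \sigma_n/\sqrt{\sigma_n^2 + 2v^2}$, which matches the second argument of $T$ in Equation (\ref{eq:post_var}). Finally, I would assemble the variance as $\mean[\Phi((\epsilon - X)/\sigma_n)^2] - (\mean[\Phi((\epsilon - X)/\sigma_n)])^2$, using $1 - \Phi(\beta) = \Phi(-\beta) = \Phi((m - \epsilon)/\sqrt{\sigma_n^2 + v^2})$ to turn $\Phi(\beta) - \Phi(\beta)^2$ into the product of the two complementary $\Phi$ terms, and multiply through by $\pi^2(\Btheta)$. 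The main obstacle is the correct identification and application of the Owen's t identity in the equicorrelated diagonal case, together with the bookkeeping needed to verify that the algebraic simplification of $\sqrt{(1-\rho)/(1+\rho)}$ reproduces exactly the stated argument; the auxiliary-variable reductions and the covariance computation are routine once the representation is set up. As a sanity check, the limit $v^2 \to 0$ forces $f(\Btheta)$ to be deterministic, and indeed the special value $T(\beta,1) = \tfrac{1}{2}\Phi(\beta)\Phi(-\beta)$ then collapses Equation (\ref{eq:post_var}) to zero.
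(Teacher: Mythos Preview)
Your proposal is correct and follows essentially the same route as the paper: factor out $\pi(\Btheta)$, reduce the first and second moments of $\Phi((\epsilon-f(\Btheta))/\sigma_n)$ to a univariate and a diagonal bivariate normal cdf respectively, apply Owen's identity $\Phi_2(h,h;\rho)=\Phi(h)-2T\!\left(h,\sqrt{(1-\rho)/(1+\rho)}\right)$, and simplify the second argument using $\rho=v^2/(\sigma_n^2+v^2)$. The only cosmetic difference is that the paper obtains the bivariate cdf by writing $\Phi^2$ as a double integral and applying Fubini together with the Gaussian product rule, whereas you reach the same object via the auxiliary-variable representation with independent $Z_1,Z_2$; both lead to the identical covariance structure and the same Owen's $T$ reduction.
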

%

We note that Equation (\ref{eq:post_mean}) equals the product of the prior density $\pi(\Btheta)$ and the point estimate of the likelihood shown in Equation \ref{eq:p_prob} which was used in \citet{Gutmann2015}. The variance of the unnormalised ABC posterior in Equation (\ref{eq:post_var}) depends on Owen's t-function that needs to be computed numerically. However, there exists an efficient algorithm to evaluate its values by \citet{Patefield2000}.

It is of interest to examine when the variance in Equation (\ref{eq:post_var}) is large. If a parameter $\Btheta$ satisfies $m_{\idxt}(\Btheta) = \epsilon$, then the first term of Equation (\ref{eq:post_var}) is maximised, and in this case the second term is maximised for $\Btheta$ values where the posterior variance $v_{\idxt}^2(\Btheta)$ is large. On the other hand, if $m_{\idxt}(\Btheta) \gg \epsilon$ but $v_{\idxt}(\Btheta) \gg |m_{\idxt}(\Btheta) - \epsilon|$, the first term in Equation (\ref{eq:post_var}) is approximately maximised and the second term is also close to its maximum value, especially if also $v_{\idxt}(\Btheta) \gg \sigma_n$. Because the ABC threshold $\epsilon$ is usually chosen very small, we thus conclude that the variance in Equation (\ref{eq:post_var}) tends to be high in regions where the mean of the discrepancy $m_{\idxt}(\Btheta)$ is small and/or the variance of the latent function $v_{\idxt}^2(\Btheta)$ is large relative to the mean function.

Some further insight to Equation (\ref{eq:post_var}) is obtained by using the approximation $\Var_{f(\Btheta) \cond D_{\idxt}}(\tildepiabc(\Btheta)) \approx ((\tildepiabc)'(\mean_{f(\Btheta) \cond D_{\idxt}}(f(\Btheta))))^2 \Var_{f(\Btheta) \cond D_{\idxt}}(f(\Btheta))$, 
where the formula $(\tildepiabc)'(\mean_{f(\Btheta) \cond D_{\idxt}}(f(\Btheta)))$ denotes the derivative of $\tildepiabc$ with respect to $f(\Btheta)$ evaluated at $\mean_{f(\Btheta) \cond D_{\idxt}}(f(\Btheta))$. This approximation, also known as the delta method, produces
\begin{equation}
-\log \Var_{f(\Btheta) \cond D_{\idxt}}(\tildepiabc(\Btheta)) \approx \frac{(\epsilon - m_{\idxt}(\Btheta))^2}{\sigma_n^2} -\log (v_{\idxt}^2(\Btheta)) - 2\log\pi(\Btheta) + \log(2\pi\sigma_n^2), 
\label{eq:log_delta_approx}
\end{equation}
where the last term is constant and can be dropped. This equation has some similarity with the lower confidence bound (LCB) criteria used in bandit problems and Bayesian optimisation 
\begin{equation}
\text{LCB}(\Btheta) = m_{\idxt}(\Btheta) - \beta_t \sqrt{v_{\idxt}^2(\Btheta)},
\end{equation}
where $\beta_t$ is a tradeoff parameter. Both equations produce small values if the mean of the discrepancy $m_{\idxt}(\Btheta)$ is small (assuming also that $\epsilon \leq m_{\idxt}(\Btheta)$) or the variance $v_{\idxt}^2(\Btheta)$ is large relative to the mean $m_{\idxt}(\Btheta)$. However, the LCB tradeoff parameter $\beta_t$ typically depends on the iteration $t$ and the dimension of the parameter space (see \citet{Srinivas2010} for theoretical analysis) while in the posterior variance (Equation (\ref{eq:log_delta_approx})) this tradeoff is determined automatically in a nonlinear fashion and, unlike for LCB, the variance formula depends also on the prior density $\pi(\Btheta)$. 

Other useful facts for $\tildepiabc$ can also be obtained. Some of these formulas are not required to apply our methodology and are thus included as supplementary material. For instance, if $\pi(\Btheta)>0$ then the cdf for $\tildepiabc(\Btheta)$ is 
\begin{align}
F_{\tildepiabc(\Btheta)}(z) = \Phi\left(\frac{\sigma_n \Phi^{-1}(z/\pi(\Btheta)) + m_{\idxt}(\Btheta) - \epsilon}{v_{\idxt}(\Btheta)}\right), \label{eq:p_cdf}
\end{align}
if $z \in (0,\pi(\Btheta))$, zero if $z \leq 0$, and $1$ if $z \geq \pi(\Btheta)$. This formula enables the computation of quantiles which can be used for assessing the uncertainty via credible intervals. Setting $\alpha = F_{\tildepiabc(\Btheta)}(z)$, where $\alpha\in(0,1)$ and solving for $z$ yields the $\alpha$-quantile that was already used in Figure \ref{fig:demo}b,
\begin{align}
z_{\alpha} = \pi(\Btheta) \Phi\left(\frac{v_{\idxt}(\Btheta) \Phi^{-1}(\alpha) - m_{\idxt}(\Btheta) + \epsilon}{\sigma_n}\right). \label{eq:quantile}
\end{align}
From the above equation we see, e.g., that the median is given by $\pi(\Btheta) \Phi({(\epsilon - m_{\idxt}(\Btheta))}{/\sigma_n})$. 

Above we assumed that the GP hyperparameters $\Bphi$ are known but in practice these need to be estimated. One can use 
the MAP-estimate in the place of the fixed values in the previous formulae. The MAP-estimate is computed by maximising the logarithm of the marginal posterior
\begin{align}
\Bphi_{\idxt}^{\text{MAP}}
&= \arg\max_{\Bphi} \Big( \log\pi(\Bphi) - \frac{1}{2}\Delta_{\idxt}^T  K^{-1}(\Btheta_{\idxt})\Delta_{\idxt} - \frac{1}{2}\log\det(K(\Btheta_{\idxt})) \Big),
\label{eq:gp_hypers_map}
\end{align}
where $\pi(\Bphi)$ is the prior density for GP hyperparameters and where the covariance function in $K(\Btheta_{\idxt})=k(\Btheta_{\idxt},\Btheta_{\idxt}) + \sigma_n^2\Id$ depends naturally also on $\Bphi$. For the rest of the paper, we assume that the MAP estimate is used for GP hyperparameters, however, we also briefly discuss how one could integrate over them in Section \ref{subsec:gp_hypers}.

\subsection{Efficient parameter acquisition} \label{subsec:acquisition_rule}


We define our loss function $\mathcal{L}_{\piabc}$ for model-based ABC inference as
%
%
\begin{equation}
\mathcal{L}_{\piabc} (D_{\idxt})
= \int_{\Theta} \Var(\tildepiabc(\Btheta) \cond D_{\idxt}) \ud \Btheta 
= \int_{\Theta} \pi^2(\Btheta) \Var(\pacc(\Btheta) \cond D_{\idxt}) \ud \Btheta, \label{eq:abc_loss}
\end{equation}
where $\tildepiabc(\Btheta) = \pi(\Btheta) \pacc(\Btheta)$ is the unnormalised ABC posterior and the variance is taken with respect to the unknown latent function $f$ conditioned on the training data $D_{\idxt}$. 
We call the function in Equation (\ref{eq:abc_loss}) as the integrated variance loss function. It measures the uncertainty in the unnormalised ABC posterior density averaged over the parameter space $\Theta$. 
The loss function is defined in terms of the unnormalised ABC posterior because we are here interested in minimising the uncertainty in the posterior shape. Also, this choice allows tractable computations unlike some other potential choices such as defining the integrated variance over the normalised ABC posterior density function. However, in principle, other loss functions, suitable for particular problem at hand, could be defined. 

We obtain the following formula for computing the expected integrated variance loss function $L_{\idxt}(\Btheta^{*})$ (abbreviated as ``\expintvar{}'') when the new candidate evaluation location is $\Btheta^{*}$. The proof is rather technical and can be found in the supplementary. 
%
%
\begin{proposition} \label{prop:expintvar}
Under the GP model described in Section \ref{subsec:gp_model}, the expected integrated variance after running the simulation model with parameter $\Btheta^{*}$ is given by
\begin{align}
L_{\idxt}(\Btheta^{*}) 
&= \mean_{\Delta^{*}\cond \Btheta^{*}, D_{\idxt}} \int_{\Theta} \pi^2(\Btheta) \Var(\pacc(\Btheta) \cond \Delta^{*}, \Btheta^{*}, D_{\idxt}) \ud \Btheta \\
\begin{split}
&= 2\int_{\Theta} \pi^2(\Btheta) \Bigg[ T\left(\frac{\epsilon - m_{\idxt}(\Btheta)}{\sqrt{\sigma_n^2 + v_{\idxt}^2(\Btheta)}}, 
\sqrt\frac{\sigma_n^2 + v_{\idxt}^2(\Btheta) - \Deltav_{\idxt}^2(\Btheta,\Btheta^{*})}{\sigma_n^2 + v_{\idxt}^2(\Btheta) + \Deltav_{\idxt}^2(\Btheta,\Btheta^{*})}\right) 
\\ 
&\quad- T\left(\frac{\epsilon - m_{\idxt}(\Btheta)}{\sqrt{\sigma_n^2 + v_{\idxt}^2(\Btheta)}}, 
\frac{\sigma_n}{\sqrt{\sigma_n^2 + 2v_{\idxt}^2(\Btheta)})}\right) \Bigg] \ud \Btheta, 
\label{eq:expintvar}
\end{split}
\end{align}
where the variance of $\pacc(\Btheta)$ is taken with respect to $\probmeas(\ud f \cond \Delta^{*}, \Btheta^{*}, D_{\idxt})$, the function $T(\cdot,\cdot)$ is the Owen's t-function as in Equation (\ref{eq:owens_t}) and
\begin{align}
\Deltav_{\idxt}^2(\Btheta,\Btheta^{*}) = 
\frac{\cov_{\idxt}^2(\Btheta,\Btheta^{*})}{\sigma_n^2 + v_{\idxt}^2(\Btheta^{*})},
\label{eq:dgpvar}
\end{align}
where $\cov_{\idxt}(\Btheta,\Btheta^{*}) = k(\Btheta,\Btheta^{*}) - k(\Btheta,\Btheta_{\idxt})K^{-1}(\Btheta_{\idxt})k(\Btheta_{\idxt},\Btheta^{*})$
is the posterior covariance between the evaluation point $\Btheta$ and the candidate location for the next evaluation $\Btheta^{*}$. 
\end{proposition}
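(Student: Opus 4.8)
The plan is to organise the computation around the law of total variance for the acceptance probability $\pacc(\Btheta) = \Phi((\epsilon - f(\Btheta))/\sigma_n)$, and then reduce every resulting Gaussian expectation to Owen's t-function through a bivariate-normal orthant-probability representation. The starting point is two standard facts about the GP update upon observing a new discrepancy $\Delta^{*}$ at $\Btheta^{*}$: the posterior variance decreases deterministically, $v_{\idxtp}^2(\Btheta) = v_{\idxt}^2(\Btheta) - \Deltav_{\idxt}^2(\Btheta,\Btheta^{*})$, and does \emph{not} depend on the as-yet-unobserved value $\Delta^{*}$, whereas the updated mean $m_{\idxtp}(\Btheta)$ is an affine function of $\Delta^{*}$. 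Since $\Delta^{*}\cond\Btheta^{*},D_{\idxt}\sim\Normal(m_{\idxt}(\Btheta^{*}),\sigma_n^2+v_{\idxt}^2(\Btheta^{*}))$, propagating this predictive law through the affine map shows that, over the randomness of $\Delta^{*}$, the updated mean is Gaussian, $m_{\idxtp}(\Btheta)\sim\Normal(m_{\idxt}(\Btheta),\Deltav_{\idxt}^2(\Btheta,\Btheta^{*}))$, its variance being exactly the variance reduction $\Deltav_{\idxt}^2(\Btheta,\Btheta^{*})$ of (\ref{eq:dgpvar}).

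The key identity is the law of total variance, which for each fixed $\Btheta$ reads
\begin{equation*}
\mean_{\Delta^{*}}[\Var(\pacc(\Btheta)\cond\Delta^{*},\Btheta^{*},D_{\idxt})] = \Var(\pacc(\Btheta)\cond D_{\idxt}) - \Var_{\Delta^{*}}[\mean(\pacc(\Btheta)\cond\Delta^{*},\Btheta^{*},D_{\idxt})].
\end{equation*}
The first term is supplied directly by Lemma \ref{lemma:p_mean_var} (its variance expression divided by $\pi^2(\Btheta)$): with $h=(\epsilon-m_{\idxt}(\Btheta))/\sqrt{\sigma_n^2+v_{\idxt}^2(\Btheta)}$ it equals $\Phi(h)\Phi(-h) - 2T(h,\sigma_n/\sqrt{\sigma_n^2+2v_{\idxt}^2(\Btheta)})$. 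For the subtracted term, the updated conditional mean is $\mean(\pacc(\Btheta)\cond\Delta^{*},\ldots)=\Phi((\epsilon-m_{\idxtp}(\Btheta))/\sqrt{\sigma_n^2+v_{\idxtp}^2(\Btheta)})$ by the same Gaussian-integral identity used in (\ref{eq:p_prob}); averaging over $\Delta^{*}$ returns $\Phi(h)$, so only its second moment remains to be found.

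To evaluate that second moment I would introduce two independent auxiliary standard normals and write the squared cdf as a bivariate orthant probability, $\Phi(z)^2=\prob(cZ_1+M\le\epsilon,\,cZ_2+M\le\epsilon\cond M)$ with $c^2=\sigma_n^2+v_{\idxtp}^2(\Btheta)$ and $M=m_{\idxtp}(\Btheta)$. Averaging over the Gaussian law of $M$ above makes $(cZ_1+M,cZ_2+M)$ bivariate normal with common variance $c^2+\Deltav_{\idxt}^2=\sigma_n^2+v_{\idxt}^2(\Btheta)$ and covariance $\Deltav_{\idxt}^2(\Btheta,\Btheta^{*})$, hence correlation $\rho=\Deltav_{\idxt}^2/(\sigma_n^2+v_{\idxt}^2)$. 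The equal-argument identity $\Phi_2(h,h;\rho)=\Phi(h)-2T(h,\sqrt{(1-\rho)/(1+\rho)})$ then gives $\Phi(h)-2T(h,a')$ with $a'=\sqrt{(\sigma_n^2+v_{\idxt}^2-\Deltav_{\idxt}^2)/(\sigma_n^2+v_{\idxt}^2+\Deltav_{\idxt}^2)}$, so $\Var_{\Delta^{*}}[\mean(\pacc\cond\cdots)]=\Phi(h)\Phi(-h)-2T(h,a')$. Substituting both pieces into the law of total variance cancels the $\Phi(h)\Phi(-h)$ terms, leaving $2[T(h,a')-T(h,\sigma_n/\sqrt{\sigma_n^2+2v_{\idxt}^2})]$; multiplying by $\pi^2(\Btheta)$ and integrating over $\Theta$ yields (\ref{eq:expintvar}).

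The main obstacle I anticipate is the second-moment step: recognising $\Phi(\cdot)^2$ as a bivariate orthant probability and tracking the induced covariance so that the equal-argument Owen's t identity applies with the correct parameter $a'$. The variance-reduction fact and the Gaussianity of the updated mean are routine GP algebra, and once the second moment is in Owen's t form the cancellation via the law of total variance is immediate. As a sanity check, letting $\Deltav_{\idxt}^2\to0$ (the non-informative case, e.g.\ $\cov_{\idxt}(\Btheta,\Btheta^{*})=0$) gives $a'\to1$, and since $T(h,1)=\tfrac12\Phi(h)\Phi(-h)$ the expected post-acquisition variance collapses back to $\Var(\pacc(\Btheta)\cond D_{\idxt})$, i.e.\ no reduction, exactly as it should.
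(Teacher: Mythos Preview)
Your argument is correct, and it is genuinely different from the paper's own proof. The paper proceeds by direct computation: it writes the post-update variance $\Var(\pacc(\Btheta)\cond\Delta^{*},\Btheta^{*},D_{\idxt})$ using Lemma~\ref{lemma:p_mean_var} with the updated GP moments $(m_{\idxtp},v_{\idxtp}^2)$, and then integrates this expression term by term against the Gaussian law of $m_{\idxtp}(\Btheta)$. That produces two integrals. The first, coming from the $\Phi-\Phi^2$ part, is handled by the same bivariate-orthant trick you use. The second, however, is an integral of an Owen's $T$-function against a Gaussian, and the paper dispatches it by a bespoke change of variables $x=\psi(z)$ that maps the $T$-integral back to another Owen's $T$. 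This step is the technical heart of the paper's derivation.

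Your route via the law of total variance,
\[
\mean_{\Delta^{*}}\big[\Var(\pacc\cond\Delta^{*},\Btheta^{*},D_{\idxt})\big]
=\Var(\pacc\cond D_{\idxt})-\Var_{\Delta^{*}}\big[\mean(\pacc\cond\Delta^{*},\Btheta^{*},D_{\idxt})\big],
\]
sidesteps that integral entirely: the first term is read off Lemma~\ref{lemma:p_mean_var} with no further work, and for the second you only need $\mean_{\Delta^{*}}[\Phi(\cdot)^2]$, which is a single bivariate-orthant probability with correlation $\rho=\Deltav_{\idxt}^2/(\sigma_n^2+v_{\idxt}^2)$. The $\Phi(h)\Phi(-h)$ terms cancel and the two Owen's $T$ terms in (\ref{eq:expintvar}) drop out immediately. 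What your approach buys is the elimination of the $\psi$-substitution; what the paper's approach buys is that it never needs to justify the tower/total-variance identity for the GP (that conditioning on $\Delta^{*}$ under $\probmeas(\ud f\cond D_{\idxt})$ coincides with the GP update to $D_{\idxtp}$), though this is standard. Your sanity check at $\Deltav_{\idxt}^2\to0$ is also correct and is a nice addition not present in the paper's proof.
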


A future evaluation at $\Btheta^{*}$ causes a deterministic reduction of the GP variance that is given by the Equation (\ref{eq:dgpvar}). 
However, the variance of the unnormalised ABC posterior depends on the realisation of the discrepancy $\Delta^{*}$ and we need to average over $\pi(\Delta^{*}\cond \Btheta^{*}, D_{\idxt})$. 
%
It is easy to see that if $\Deltav_{\idxt}^2(\Btheta,\Btheta^{*}) \rightarrow v_{\idxt}^2(\Btheta)$, then the integrand at the corresponding parameter $\Btheta$ approaches zero. It can also be shown (using \citet[Eq.~2.3]{Owen1980}) that if $\Deltav_{\idxt}^2(\Btheta,\Btheta^{*}) = 0$, then the integrand in Equation (\ref{eq:expintvar}) equals the current variance given by Equation (\ref{eq:post_var}).

While some of the derivations could be done analytically, computing the expected integrated variance requires integration over the parameter space $\Theta$. This can be done with Monte Carlo or quasi-Monte Carlo methods; here we use importance sampling (IS) to approximate the integral when $p>2$. Using the IS estimator \citep[Eq.~3.10, p.~95]{Robert2004}, we obtain 
\begin{align}
L_{\idxt}(\Btheta^{*}) &= 2 \int_{\Theta} \pi^2(\Btheta) g_{\idxtp}(\Btheta,\Btheta^{*}) \ud \Btheta 
%
\approx 2 \sum_{i=1}^{s} \omega^{(i)} \pi^2(\Btheta^{(i)}) g_{\idxtp}(\Btheta^{(i)},\Btheta^{*}), \label{eq:is}
%
%
\end{align}
where $g_{\idxtp}(\Btheta,\Btheta^{*})$ is the term inside the square brackets in Equation (\ref{eq:expintvar}) and the importance weights are given by
\begin{equation}
\omega^{(i)} = \frac{1}{\pi^2(\Btheta^{(i)})\Var(\pacc(\Btheta^{(i)}) \cond D_{\idxt})} \Bigg/ \sum_{j=1}^{s} \frac{1}{\pi^2(\Btheta^{(j)})\Var(\pacc(\Btheta^{(j)}) \cond D_{\idxt})}, 
\label{eq:is_weights}
\end{equation}
and $\Btheta^{(i)} \sim \pi_q(\cdot)$ for $i=1,\ldots,s$. The importance distribution $\pi_q(\Btheta)$ is proportional to the prior squared times the current variance of the unnormalised ABC posterior i.e.~$\pi_q(\Btheta) \propto \pi^2(\Btheta) \Var(\pacc(\Btheta) \cond D_{\idxt})$. 
%
This importance distribution is a reasonable choice, because one evaluation is unlikely to change the variance surface much and the expected variance thus has similar shape as the current variance surface.
%
It is easy to see that if the prior is bounded and proper i.e.~$\pi(\Btheta) < \infty$ and $\int_{\Theta}\pi(\Btheta)\ud\Btheta=1$, then $\pi_q$ defines a valid probability density function (up to normalisation). 
Because the normalising constant of $\pi_q$ is unavailable, we need to normalise the weights in Equation (\ref{eq:is_weights}).  
Generating samples from the importance distribution $\pi_q$ is not straightforward but can be done using (e.g.~adaptive) Metropolis algorithm or using sequential Monte Carlo methods. 

As outlined in Section \ref{sec:problem}, the new evaluation location is chosen to minimise the expected loss, that is
\begin{equation}
\Btheta_{t+1} \in \{\Btheta\in\Theta: \Btheta = \arg\min_{\Btheta^{*} \in \Theta} L_{\idxt}(\Btheta^{*})\}, 
\label{eq:acq_opt}
\end{equation}
where the right hand side is a set of parameters because the minimiser may not be unique. We call this new strategy 'an acquisition rule' according to the nomenclature in the Bayesian optimisation literature. Unlike in BO, however, our aim is not to optimise the discrepancy but to minimise our uncertainty in the ABC posterior approximation. The second term in Equation (\ref{eq:expintvar}) does not depend on $\Btheta^{*}$ and its value can be computed just once (or omitted completely) and the normalisation of the prior density $\pi(\Btheta)$ does not affect the solution of (\ref{eq:acq_opt}) as it only scales the objective function. Gradient-based optimisation with multiple starting points can be used for solving (\ref{eq:acq_opt}) and the gradient is derived in the supplementary material. 
The resulting algorithm for estimating the ABC posterior is outlined as Algorithm \ref{alg:gp_abc_alg}. 

\begin{algorithm}
\caption{GP-based ABC inference using the expected integrated variance acquisition function. \label{alg:gp_abc_alg}} 
 \begin{algorithmic}[1]
 %
 %
 \State Generate initial training locations $\Btheta_{1:t_0} \sim \pi(\cdot)$ 
 \For{$t=1:t_0$}
 \State Simulate $\Bx_{t} \sim \pi(\cdot \cond \Btheta_{t})$ 
 \State Compute $\Delta_{t} \leftarrow \Delta(\Bx_{obs},\Bx_{t})$
 \EndFor
 \For{$t=t_0:t_{\text{max}}-1$}
  %
  \State Estimate GP hyperparameters $\Bphi^{\text{MAP}}_{\idxt}$ using $D_{1:t}$ and Equation (\ref{eq:gp_hypers_map}) 
  %
  \State Precompute Cholesky factorisation for the GP prediction
  \State Simulate evaluation points $\Btheta^{(i)}$ and weights $\omega^{(i)}$ for $i = 1,\ldots,s$ by sampling from $\pi_q(\cdot)$ 
  \State Precompute the second term in Equation (\ref{eq:expintvar}) 
  %
  \State Obtain $\Btheta_{t+1}$ by solving the optimisation problem in Equation (\ref{eq:acq_opt})
  \State Simulate $\Bx_{t+1} \sim \pi(\cdot \cond \Btheta_{t+1})$ 
  \State Compute $\Delta_{t+1} \leftarrow \Delta(\Bx_{obs},\Bx_{t+1})$
  \State Update the training data $D_{1:t+1} \leftarrow D_{1:t} \cup \{(\Delta_{t+1}, \Btheta_{t+1})\}$
 \EndFor
 \State Estimate GP hyperparameters $\Bphi^{\text{MAP}}_{1:t_{\text{max}}}$ using $D_{1:t_{\text{max}}}$ and Equation (\ref{eq:gp_hypers_map})
 \State Simulate samples $\Bvartheta^{(1:n)}$ from the density defined by Equation (\ref{eq:post_mean})
 \State \Return $\Bvartheta^{(1:n)}$ as a sample from the approximate posterior density
 \end{algorithmic}
\end{algorithm}

\subsection{Alternative acquisition rules} \label{subsec:alternatives}

We briefly discuss some alternative acquisition rules for ABC inference. Their derivations follow directly from our previous analysis and we include these strategies in our experiments in Section \ref{sec:experiments}. 
One such alternative to the expected integrated variance strategy is to evaluate where the current uncertainty of the unnormalised ABC posterior is highest. This approach is similar to \citet{Kandasamy2015}. This strategy is a reasonable heuristic in the sense that the next evaluation location is where improvement in estimation accuracy is needed most, although it does not account for how large an improvement can be expected at the location, or overall. 
This approach requires solving the optimisation problem 
\begin{align}
\Btheta_{t+1} &\in \{\Btheta\in\Theta: \Btheta = \arg\max_{\Btheta^* \in \Theta} \pi^2(\Btheta^*)\Var(\pacc(\Btheta^*) \cond D_{1:t}) \} \\
&= 
\left\{\Btheta\in\Theta: \Btheta = \arg\max_{\Btheta^* \in \Theta} \left( \log\pi(\Btheta^*) + \log\sqrt{\Var(\pacc(\Btheta^*) \cond D_{1:t})} \right) \right\}, \label{eq:maxvar_acq_def_prior}
\end{align}
where the current variance $\Var(\pacc(\Btheta) \cond D_{1:t}))$ is given by Equation (\ref{eq:post_var}) 
and is taken with respect to $\Pi(\ud f \cond D_{\idxt})$. We call this method the ``\maxvar{}'' acquisition rule. 
%
The gradient of this acquisition function is derived in the supplementary material.


To encourage further exploration, similarly to \citet{Gutmann2015}, we also consider a stochastic variant of the \maxvar{} acquisition rule in Equation (\ref{eq:maxvar_acq_def_prior}). Specifically, we generate the evaluation point randomly according to the variance surface $\pi_q(\Btheta) \propto \pi^2(\Btheta)\Var(\pacc(\Btheta) \cond D_{1:t})$ which we also use as an importance distribution for the expected integrated variance acquisition function as discussed earlier. That is, instead of finding the maximiser, we generate $\Btheta_{t+1} \sim \pi_{q}(\Btheta)$. 
%
This strategy requires generating random samples from $\pi_q(\Btheta)$ but sampling (and optimising) the variance function can be done fast compared to the time required to run the simulation model. We call this method ``\randmaxvar{}''.

The stochastic acquisition rule is reminiscent of Thompson sampling, 
but it is actually quite different. In our method, acquisitions are drawn at random from the probability distribution which is proportional to the (point-wise) variance of the approximate posterior density. In Thompson sampling, instead, one generates a posterior density realisation from the model, and chooses the next point as the maximiser of this realisation.

The \maxvar{} and \randmaxvar{} strategies avoid the integration over the parameter space that is necessary for the \expintvar{} method. 
However, one could replace the integration in \expintvar{} by only a single evaluation at the candidate point. In other words, this method chooses a location with the highest expected reduction in the uncertainty of the unnormalised ABC posterior in that particular location. We call this variant the ``\expdiffvar{}'' from now on.

A comparison of the acquisition functions in a one-dimensional toy problem is shown in Figure \ref{fig:acq_comparison_supplementary}. A simulation model has been run eight times and the acquisition functions for selecting the ninth evaluation location (shown with triangles) are plotted for comparison. The current variance surface (\maxvar{}) and the expected integrated variance (\expintvar{}) function are plotted with three values of the threshold $\epsilon$. Unlike the current variance surface, the expected integrated variance appears insensitive to the value of the threshold. 
Figure \ref{fig:acq_comparison_supplementary}b shows also that using the MAP-estimate for the GP hyperparameters causes underestimation of the variance of the unnormalised ABC posterior. In the next section we show how the uncertainty in GP hyperparameters is (approximately) taken into account. 
%
\begin{figure}[ht]
\centering
\includegraphics[width=0.6\textwidth]{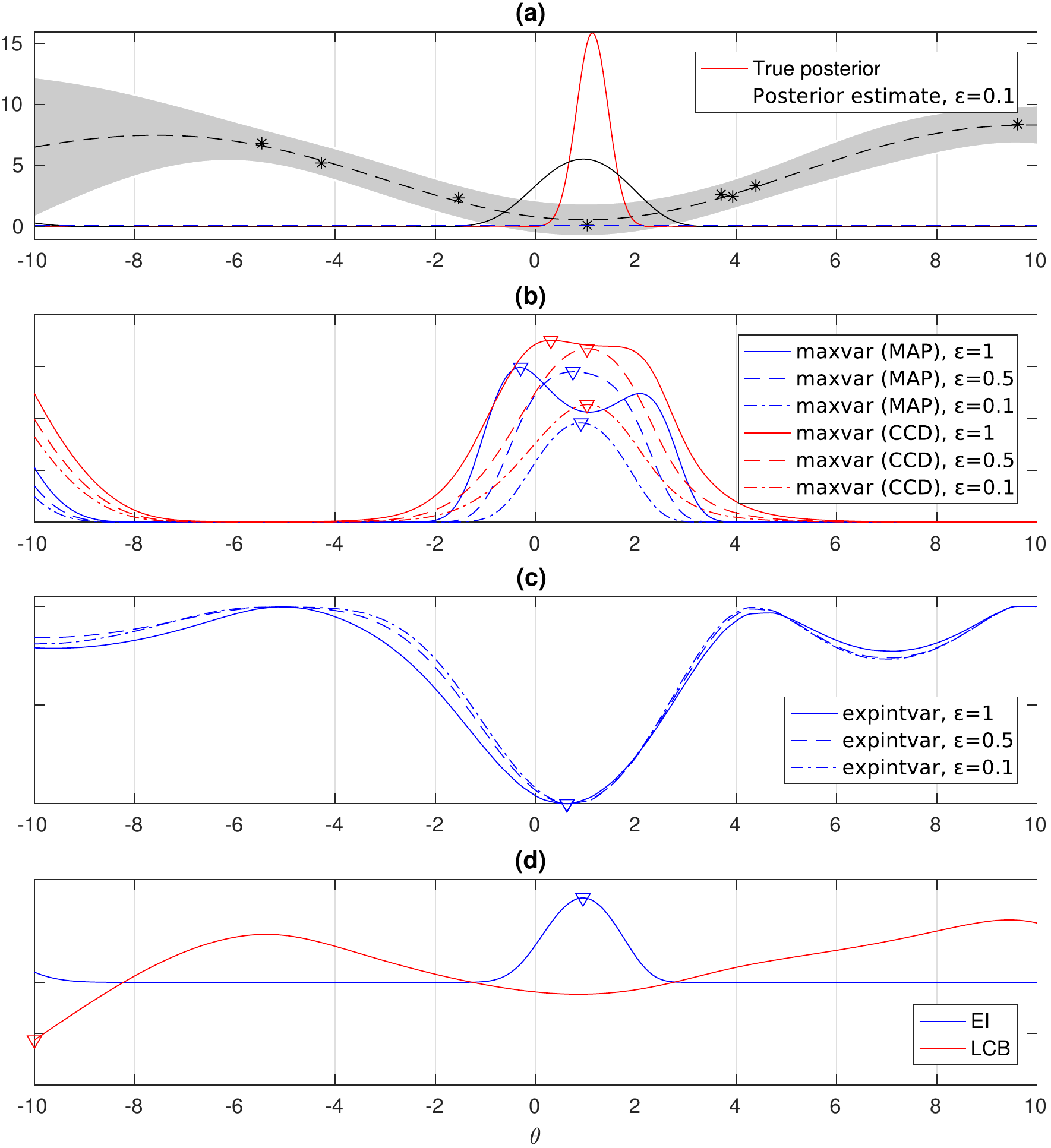}
\caption
{(a) The discrepancy observations (black stars) and the estimate of the ABC posterior density based on eight training data points (with $\epsilon=0.1$) as compared to the true posterior. (b) The variance of the unnormalised ABC posterior is computed using the MAP estimate (\maxvar{} (MAP)) or CCD integration (\maxvar{} (CCD)) for GP hyperparameters and for three values of the threshold $\epsilon$. Details of the CCD integration are in Section \ref{subsec:gp_hypers}. (c) Expected integrated variance (\expintvar{}) acquisition function. (d) Expected improvement (EI) and lower confidence bound (LCB) criteria (scaled to fit the same figure). Note that the scales of the variance function (b) and acquisition functions computed with different thresholds, (c) and (d), are not comparable. 
} \label{fig:acq_comparison_supplementary}
\end{figure}

\subsection{Uncertainty in hyperparameters} \label{subsec:gp_hypers}

Above we assumed that either the GP hyperparameters $\Bphi$ are known or MAP estimates are used. Here we briefly discuss how the uncertainty in the GP hyperparameters could also be taken into account. 
Integrating over the uncertainty in the GP hyperparameters requires Monte Carlo sampling as in \citet{Murray2010}. An alternative approach is to use central composite design \citep{Rue2009,Vanhatalo2010}.
Briefly, in central composite design (CCD) certain design points $\Bphi^i$ are chosen and each of them is given a weight $\omega^i \propto \pi(\Bphi^i \cond D_{\idxt})\gamma^i \propto \pi(D_{\idxt} \cond \Bphi^i)\pi(\Bphi^i)\gamma^i$, where $\gamma^i$ is a design weight. This approach has the advantage that the amount of design points grows only moderately with increased dimension and has been shown to yield good accuracy in practice. Further details on choosing the design points and their weights are given in \citet{Vanhatalo2010}.

Integrating over the uncertainty in the GP hyperparameters $\Bphi$ in Lemma \ref{lemma:p_mean_var} leads to the following calculations. Using the law of total expectation yields 
\begin{align}
%
%
\mean(\pacc(\Btheta)) = \mean_{\Bphi}\mean_{f\cond\Bphi}(\pacc(\Btheta)) 
\approx \sum_{i} \omega^i \, \mean_{f\cond\Bphi=\Bphi^i}(\pacc(\Btheta)) 
= \sum_{i} \omega^i \, \Phi\left(a(\Btheta,\Bphi^i) \right), \label{eq:int_mean_p}
\end{align}
where the grid points and the corresponding weights are $\Bphi^i$ and $\omega^i$, respectively, and where $a(\Btheta,\Bphi^i) = {(\epsilon - m_{\idxt}(\Btheta\cond \Bphi^i))}/{\sqrt{(\sigma_n^2)^i + v_{\idxt}^2(\Btheta\cond \Bphi^i)}}$. 
If Monte Carlo sampling is used, then $\omega^i = 1/s$ for all $i=1,\ldots,s$, where $s$ is the number of samples.  
Similarly, for the variance we obtain
\begin{align}
\Var(\pacc(\Btheta)) &= \mean(\pacc(\Btheta)^2) - [\mean(\pacc(\Btheta))]^2 \nonumber \\ 
%
%
&= \mean_{\Bphi}\mean_{f\cond\Bphi}(\pacc(\Btheta)^2) - [\mean_{\Bphi}\mean_{f\cond\Bphi}(\pacc(\Btheta))]^2 \nonumber \\ 
&\approx \sum_{i} \omega^i \left[ \Phi(a(\Btheta,\Bphi^i)) - 2T(a(\Btheta,\Bphi^i), b(\Btheta,\Bphi^i)) \right] 
- \left[ \sum_{i} \omega^i \, \Phi(a(\Btheta,\Bphi^i)) \right]^{2},
\label{eq:int_var_p} 
\end{align}
where $b(\Btheta,\Bphi^i) = {(\sigma_n)^i}/{\sqrt{(\sigma_n^2)^i + 2v_{\idxt}^2(\Btheta\cond \Bphi^i)}}$. 
This formula with CCD integration was already used in Figure \ref{fig:acq_comparison_supplementary}b.

One can also take into account the uncertainty in GP hyperparameters in the expected integrated variance acquisition function. The posterior predictive distribution for a future simulation is then approximated by a Gaussian mixture and one can make the simplification by (incorrectly) assuming that the future evaluation will not affect the GP hyperparameters but only the latent function $f$. Evaluating the resulting acquisition function requires a large number of calls to Owen's t-function and GP formulas and is computationally more costly and thus possibly impractical. 
Alternatively, one can define an integrated\footnote{Note that the term ``integrated'' here refers to integrating over GP hyperparameters $\Bphi$ and not for integrating over the parameter space $\Theta$ as in Equation (\ref{eq:expintvar}).} acquisition function as in 
\citet{Snoek2012,HernandezLobato2014,Wang2017} which only requires computing Equation (\ref{eq:expintvar}) for each sampled GP hyperparameter $\Bphi^i$. The integrated acquisition function is then averaged over these values. 
Alternatively, one could simply use the posterior mean of the hyperparameters in the place of the MAP estimate. 
However, we leave a detailed analysis for future work.

\section{Experiments} \label{sec:experiments}

We compare the proposed expected integrated variance acquisition rule (\expintvar{}) to commonly used BO strategies: expected improvement (EI) and lower confidence bound (LCB) criterion, see e.g.~\citet{Shahriari2015}. We use the same trade-off parameter for LCB as \citet{Gutmann2015}, but unlike them, we consider the deterministic LCB rule. 
As a simple baseline, we also draw points sequentially from the uniform distribution, abbreviated as ``unif''. 
We also included the probability of improvement (PI) strategy in preliminary experiments, but it resulted in poor estimates and was therefore excluded from the comparisons. 

In addition to \expintvar{}, we also include the \maxvar{}, \randmaxvar{} and \expdiffvar{} strategies, which were briefly described in Section \ref{subsec:alternatives}, to our list of methods to be compared. 
The MAP estimate for the GP hyperparameters $\Bphi$ is used in all the experiments.
%
We use MATLAB and GPstuff 4.6 \citep{Vanhatalo2013} for GP fitting. For fast and accurate computation of Owen's t-function, we use a C-implementation of the algorithm by \citet{Patefield2000}. 
The algorithms in this article are also made available in the ELFI (engine for likelihood-free inference) Python software package by \citet{Lintusaari2017}. 

The total variation (TV) distance is used for assessing the accuracy of the posterior approximation. It is defined as $\text{TV} = 1/2 \int_{\Theta} |\hatpiabc(\Btheta) - \pitrue(\Btheta)| \ud \Btheta $, where $\hatpiabc$ is the estimated ABC posterior and $\pitrue$ is the reference distribution. As the reference we use the exact ABC posterior with the same threshold as used for the approximations but in some scenarios, however, the reference distribution is the exact posterior for computational convenience and to see the overall approximation quality. 
The point estimate of the ABC posterior density function for the comparisons is always computed using Equation (\ref{eq:post_mean}). {\color{\revcol} We demonstrate our approach with multiple toy models as well as two realistic models. While the likelihood is actually available for the toy models, we restrict our comparison to the model-based ABC methods, the focus of this work, at the same time acknowledging that in practice with likelihood available the standard methods, such as MCMC, are expected to outperform the likelihood-free alternatives. 
An overview of the results is given in Table~\ref{table1} and discussed in detail in the following sections. 
}

\begin{table} 
\begin{center}
  \begin{tabular}{ lccccccc }
    \toprule
    & \expintvar{} & \expdiffvar{} & \maxvar{} & \randmaxvar{} & LCB & EI & unif \\
    \midrule
    unimodal$^*$ &    1.00 &   1.39  &  1.52  &  1.23  &  1.27  &  2.54  &  \textbf{0.97}\\
    bimodal$^*$ & \textbf{1.00} &   1.23  &  1.24  &  1.03  &  1.04  &  1.51  &  1.13\\
    unidentifiable$^*$ & \textbf{1.00}  &  1.11  &  1.21 &   1.12  &  1.01  &  1.58  &  1.49\\
    banana$^*$ & \textbf{1.00}  &  1.12 &   1.23  &  1.09  &  1.08  &  1.67 &   1.47\\
    \midrule
Gaussian (strong prior)  &\textbf{1.00}&    1.13&    1.18&    1.24&    1.68&    2.68&    1.02\\
Gaussian (weak prior)   &\textbf{1.00}&    1.20&    1.16&    1.07&    1.10&    1.59&    1.83\\
    \midrule
    Gaussian 3d & 1.00&    1.26&    1.16&    \textbf{0.94}&    1.26&    1.67&    2.24  \\ 
    Gaussian 6d & 1.00&    1.06&    1.08&    \textbf{0.98}&    1.14&    1.42&    1.94  \\ 
    Gaussian 10d & \textbf{1.00}&    1.08&    1.08&    1.17&    1.21&    1.51&    1.45 \\ 
    \midrule
    Lotka-Volterra & \textbf{1.00}&    1.20&    1.37&    1.10&    1.15&    1.85&    1.62 \\
    \bottomrule
  \end{tabular}
  \caption{{\color{\revcol} Results for the test problems. The numbers in the table represent the median of the area under the TV curve (TV values as a function of iteration) scaled so that the proposed \expintvar{} method obtains value one. Smaller values mean better average performance. In the first four test problems (marked with $^*$), the reference distribution is the exact ABC posterior obtained using the same threshold as the model-based estimate. In the other cases, TV distance is computed with respect to the 'true' posterior. For the Gaussian 3d-10d examples, the TV represents the average TV of marginal densities. 
  }} \label{table1}
\end{center}
\end{table}

\subsection{Synthetic 2D simulation models} \label{sec:synth_experiments}

To compare the different acquisition strategies first without the need to actually handle different simulation models, we construct ``synthetic'' discrepancies by adding Gaussian noise to certain parametric curves, and use these to simulate the discrepancy realisations directly. The {\color{\revcol} exact ABC posterior that is used as a reference distribution here} is computed using the posterior density given by Equation (\ref{eq:correct_estim_post}) with a small predefined threshold $\epsilon$. 
%
%
As test cases we consider 1) a unimodal density with two correlated variables, 2) a bimodal density, 3) a density where the first parameter is (almost) unidentifiable, and 4) a banana shaped density. For all cases, a uniform prior was assumed. The resulting {\color{\revcol} exact ABC} posterior densities are illustrated in Figure \ref{fig:synth_problems}. (See supplementary material for additional details). 
%
%
The integration and sampling steps required by \expintvar{} and \randmaxvar{} strategies are performed in a 2D grid of $50^2$ evaluation locations. 
The initial training set size is $t_0=10$ and the initial training sets for the repeated experiments are generated randomly from the uniform prior as is done in the other test cases as well. 

The threshold is fixed so that differences in approximation quality between the acquisition methods are solely caused by the selection of the evaluation locations. 
However, because selecting a reasonable threshold can be challenging in practice, we also examine how updating this value adaptively during the acquisitions affects the results. In the supplementary we show results when the threshold is constantly updated so that it matches either the $0.01$th or the $0.05$th quantile of the realised discrepancies. 
%

\begin{figure*}[ht]
\centering
\includegraphics[width=0.9\textwidth]{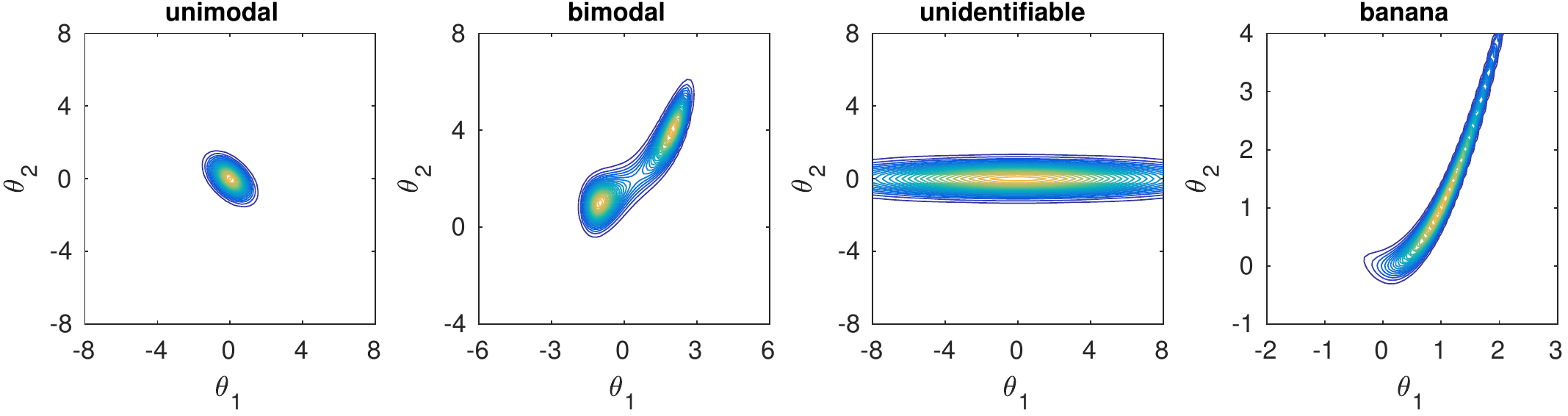}
\caption{Exact ABC posterior densities for the synthetic 2d test problems. 
}
\label{fig:synth_problems}
\end{figure*}

\begin{figure*}[ht]
\centering
\includegraphics[width=0.65\textwidth]{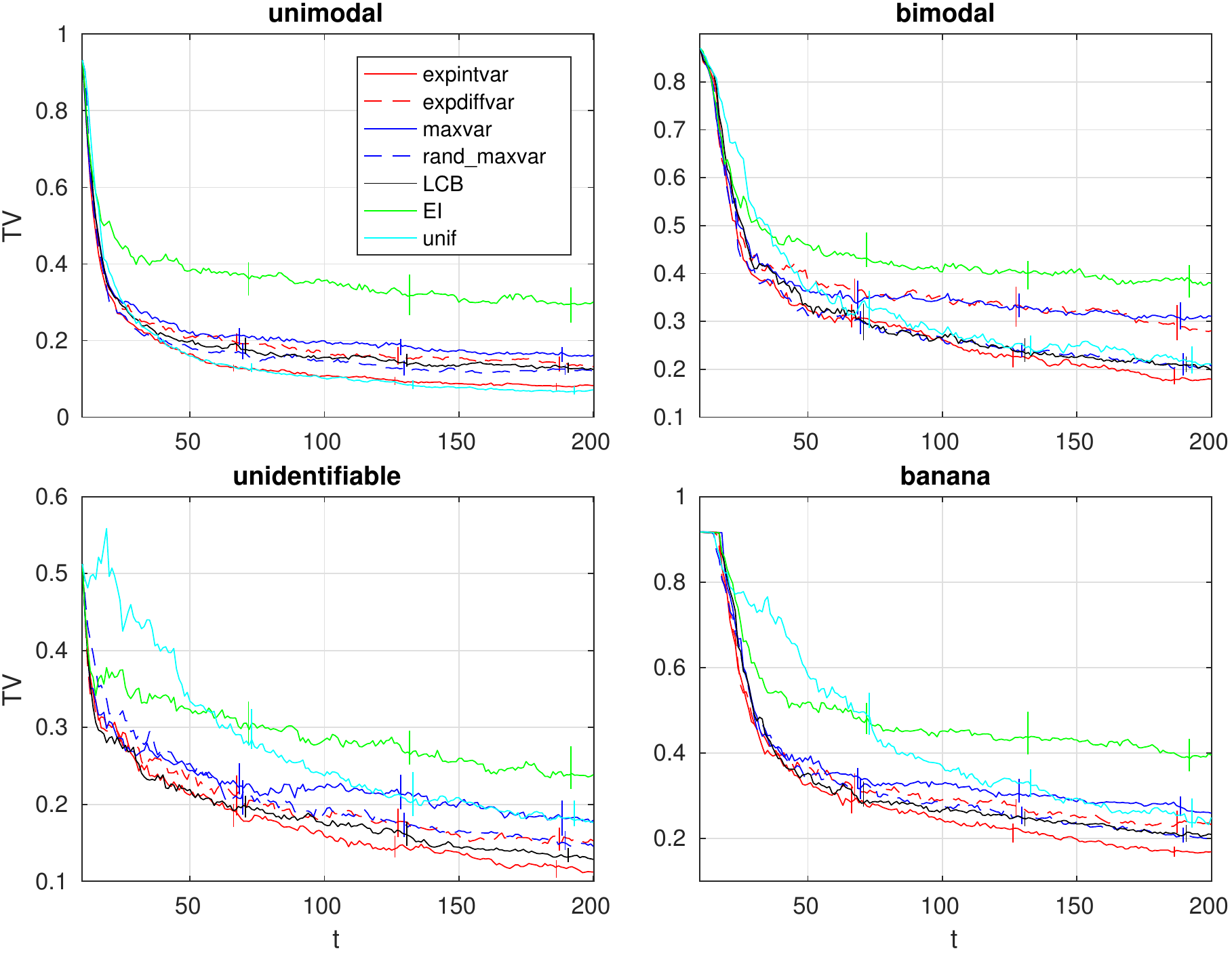}
\caption{Median of the TV distance between the estimated {\color{\revcol} ABC posterior and the corresponding} exact ABC posterior over 100 experiments. Vertical lines show the 95\% confidence interval of the median computed using the bootstrap.}
\label{fig:synth_eps_fixed}
\end{figure*}

The results in Figure \ref{fig:synth_eps_fixed} indicate that the \expintvar{} is the best method overall but also \randmaxvar{} produces good results. Of the common alternatives, LCB is clearly the best and produces results with similar accuracy as \randmaxvar{}. The performance of the EI strategy is poor because it tends to focus evaluations greedily around the mode and samples insufficiently in the tail areas which often results in poor estimates to the ABC posterior density and high variability between different experiments. 
Interestingly, the uniform strategy produces the best estimates in the case of the unimodal example. Most of the acquisitions are not focused on the modal region but because the modelling assumptions hold everywhere and the parameter space is rather small, the extrapolation seems to work well in this case.

\subsection{Gaussian simulation model} \label{sec:gaussian_example}

A simple Gaussian simulation model is used to study the effect of prior strength and the dimension of the parameter space.
Data points are generated independently from $\Bx_i \sim \Normal(\cdot \cond \Btheta,\BSigma), i = 1,\ldots,n$, where $\Btheta \in \Theta = [0,8]^p$ needs to be estimated and the covariance matrix $\BSigma$ is known. If $\Btheta \sim \Normal(\Ba,\BB)$ truncated to $\Theta$, the true posterior is $\Normal(\Btheta \cond \Ba^{\star},\BB^{\star})$ truncated to $\Theta$, where $\Ba^{\star} = \BB^{\star}(\BB^{-1}\Ba + n\BSigma^{-1}\bar{\Bx}_{obs})$, $\BB^{\star} = (\BB^{-1} + n\BSigma^{-1})^{-1}$ and $\bar{\Bx}_{obs} = n^{-1}\sum_{i=1}^n\Bx_{i}$ is the sample mean.
As discrepancy, we use the Mahalanobis distance $\Delta_{\Btheta} = ((\bar{\Bx}_{obs}-\bar{\Bx}_{\Btheta})^T\BSigma^{-1}(\bar{\Bx}_{obs}-\bar{\Bx}_{\Btheta}))^{1/2}$. {\color{\revcol}The true posterior is used for comparisons in all the following experiments with the Gaussian model. }


\subsubsection{Strength of the prior} 

In the first experiment we set $p=2$, $n=5$, $\BSigma_{ii}=1$, and $\BSigma_{ij}=0.5$ for $i\neq j$. The initial training set size is $t_0=10$ and the threshold is fixed to $\epsilon = 0.1$. Integration and sampling for \expintvar{} and \randmaxvar{} are done as in Section \ref{sec:synth_experiments}. The true data mean of $\Btheta$ is $[2,2]^T$. The mean of the (truncated) Gaussian prior is $\Ba = [5,5]^T$ and the covariance matrix is $\BB = b^2\Id$. We vary $b$, allowing us to study the impact of prior strength relative to the likelihood.  
%
Figure \ref{fig:prior} shows the results, and we see that the proposed acquisition rules perform consistently well regardless the strength of prior, and focus the evaluations on the posterior modal region. On the other hand, LCB samples where the discrepancies are small, i.e.~in areas of high likelihood, leading to sub-optimal posterior estimation whenever the prior is also informative. Comparing Figures \ref{fig:prior}a and \ref{fig:prior}b shows that using the \expintvar{} strategy also avoids unnecessary evaluations on the boundary, which is often undesired also in the Bayesian optimisation methods, see \citet{Siivola2017} for a discussion. Curiously, the uniform sampling (unif rule) works well when prior information is strong. 

\begin{figure*}[ht]
\centering
\begin{subfigure}{0.3\textwidth}
\centering
\includegraphics[width=1\textwidth]{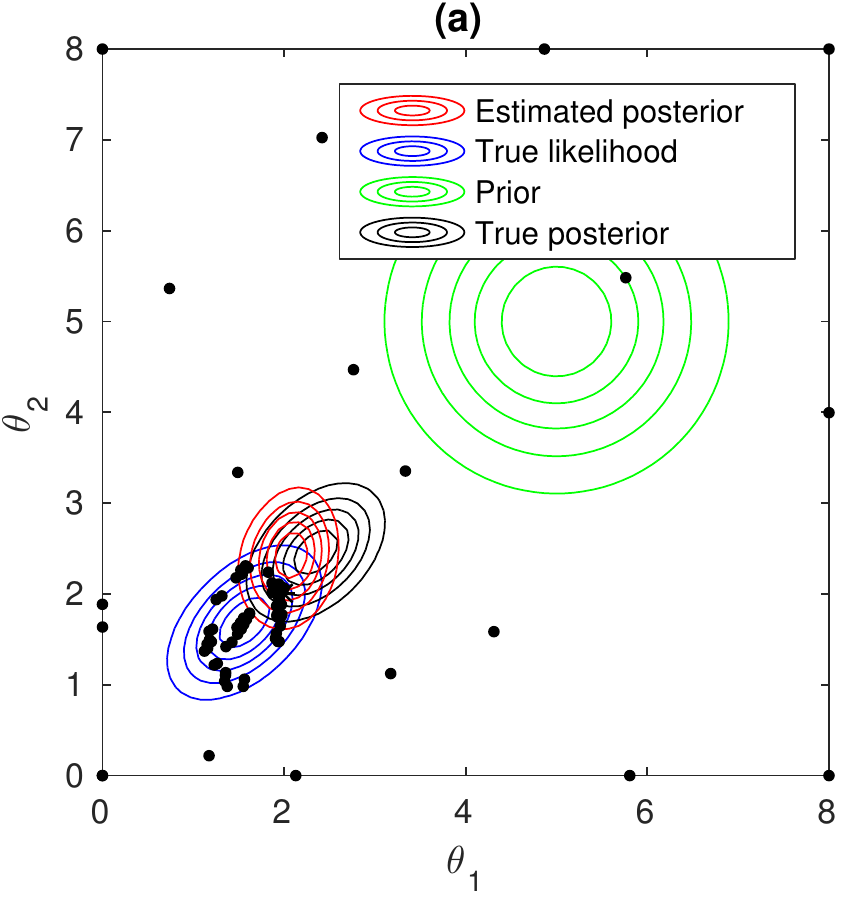}
\end{subfigure}
\begin{subfigure}{0.3\textwidth}
\centering
\includegraphics[width=1\textwidth]{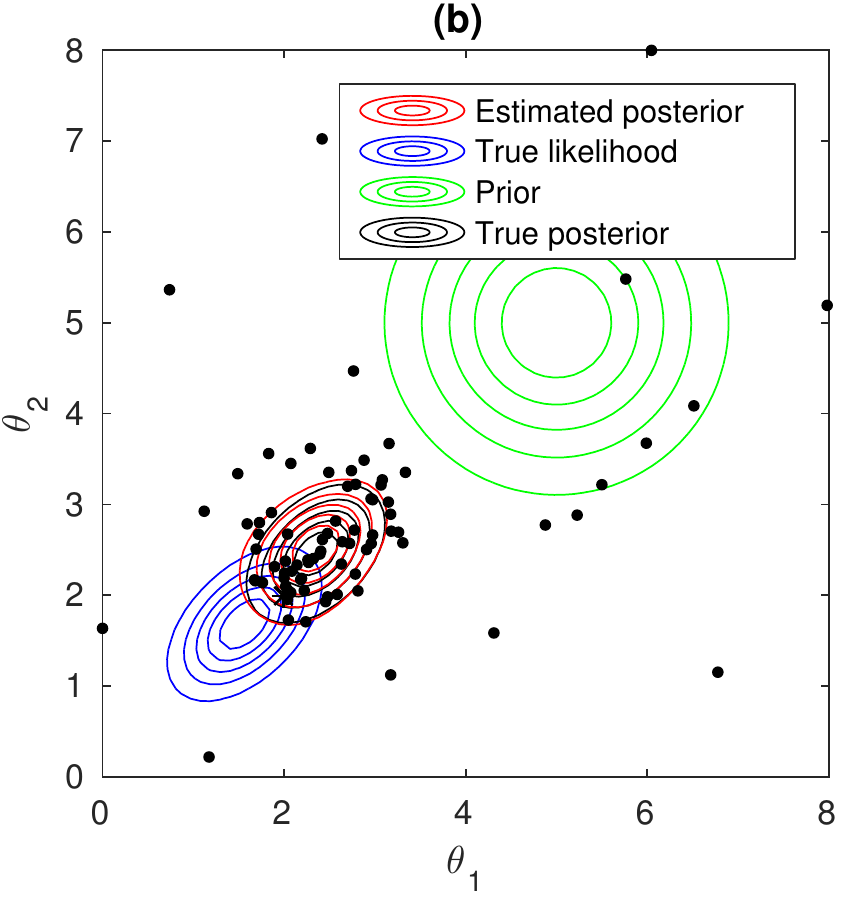}
\end{subfigure}
\begin{subfigure}{0.3\textwidth}
\centering
\includegraphics[width=\textwidth]{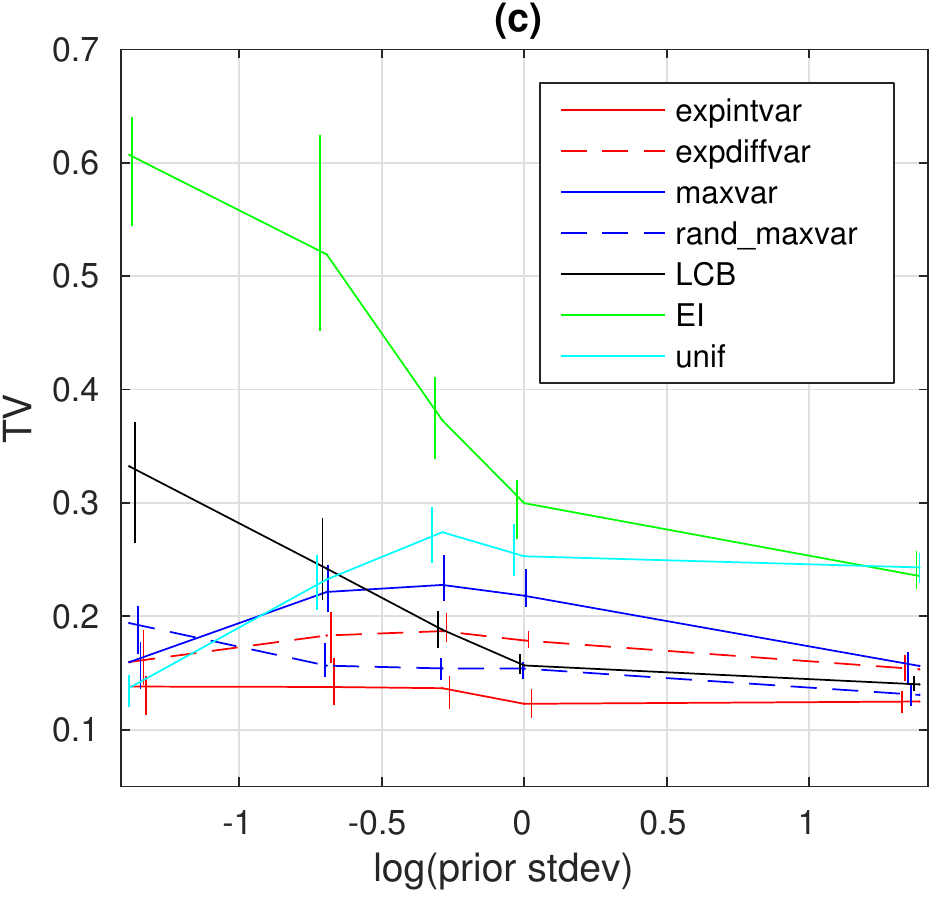}
\end{subfigure}
\caption
{Acquired training data locations (black dots) for (a) LCB, (b) \expintvar{} after $70$ acquisitions. As discussed in \citet{Gutmann2015}, the LCB strategy ignores prior information which here leads to suboptimal selection of evaluation locations. 
(c) Median TV between the estimated {\color{\revcol} ABC posterior and the corresponding} true posterior as a function of the standard deviation (stdev) of the Gaussian prior over $100$ experiments and after $200$ evaluations (small stdev corresponds to strong prior information). 
} \label{fig:prior}
\end{figure*}


\subsubsection{High-dimensional test cases}

Next we investigate the effect of the dimension $p$ of the parameter space. The settings are as before, except that now we use uniform priors supported on $\Theta = [0,8]^p$ 
and the threshold is set adaptively to the 0.01th quantile as described in Section \ref{sec:synth_experiments}. 
Further, $n=15$, and the initial training set sizes are $t_0=20$ (3d) and $t_0=30$ (6d and 10d). Adaptive MCMC (with multiple chains) is used to sample from the {\color{\revcol}model-based ABC posterior estimates required in the line 17 of Algorithm \ref{alg:gp_abc_alg}} and, in the case of \expintvar{} and \randmaxvar{}, from the probability density $\pi_q(\Btheta)$. For \expintvar{} we use $s=500$ importance samples in 3d and $s=200$ in 6d and 10d. Unlike in the other test problems, the TV distance measures here the average of TVs between all the marginal densities. 

Figure \ref{fig:high_dim} shows the results. With $p\leq6$, the \randmaxvar{} is the most accurate and slightly better than \expintvar{} strategy. 
%
However, in 10d it suffers from instability in MCMC convergence. 
Detailed examination shows that the method often produces multimodal posterior estimates which makes the sampling difficult. Such densities are likely a result of the random acquisitions. Namely, even if the uncertainty is high in some region, it can happen that no evaluations occur there during the available iterations, due to the randomness and the curse of dimensionality. EI also tends to produce multimodal difficult-to-sample posterior estimates but similar issues were only rarely observed with other strategies. The results suggest that in high dimensions the strategies that select the acquisition locations deterministically should be preferred over the stochastic ones. 

\begin{figure*}[ht]
\centering
\begin{subfigure}{0.3\textwidth}
\includegraphics[width=\textwidth]{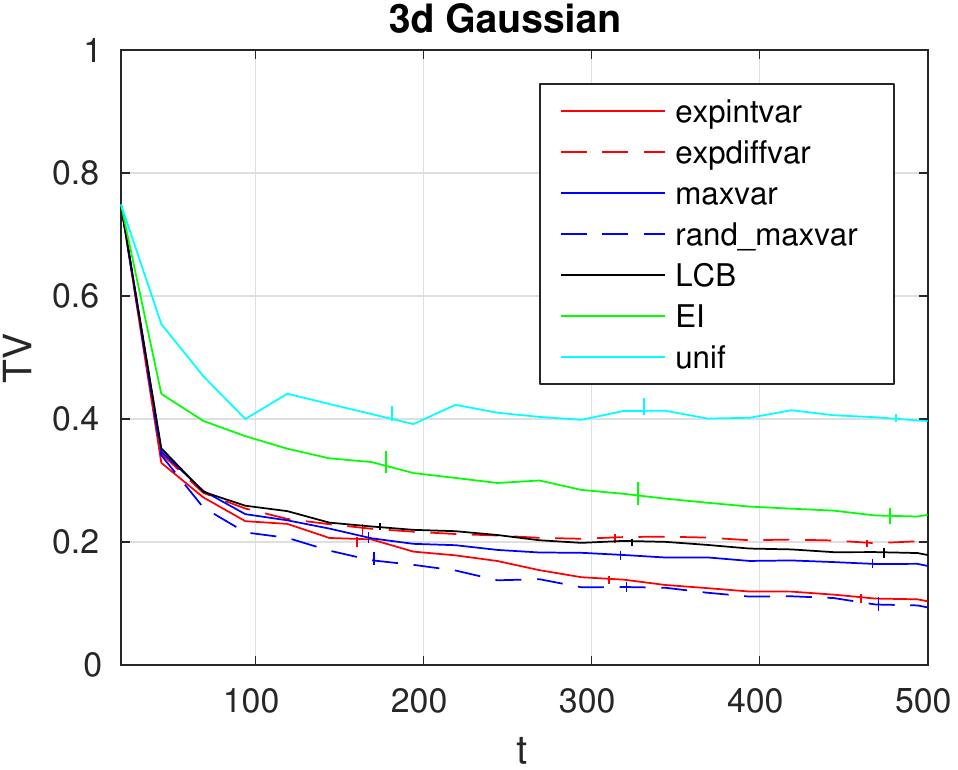}
\end{subfigure}
\begin{subfigure}{0.3\textwidth}
\includegraphics[width=\textwidth]{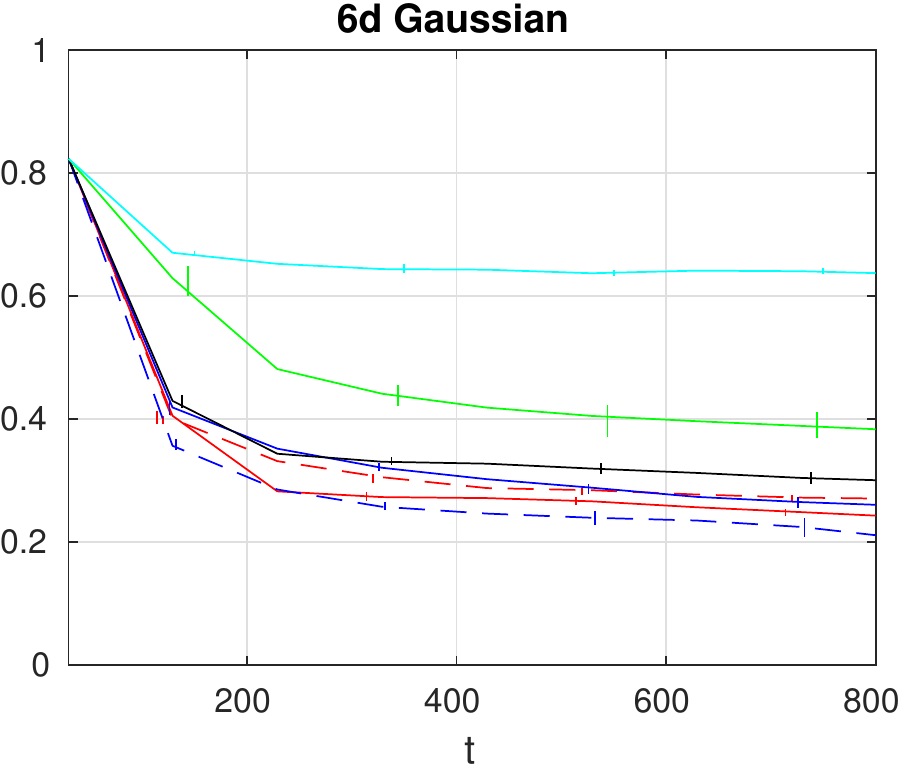}
\end{subfigure}
\begin{subfigure}{0.3\textwidth}
\includegraphics[width=\textwidth]{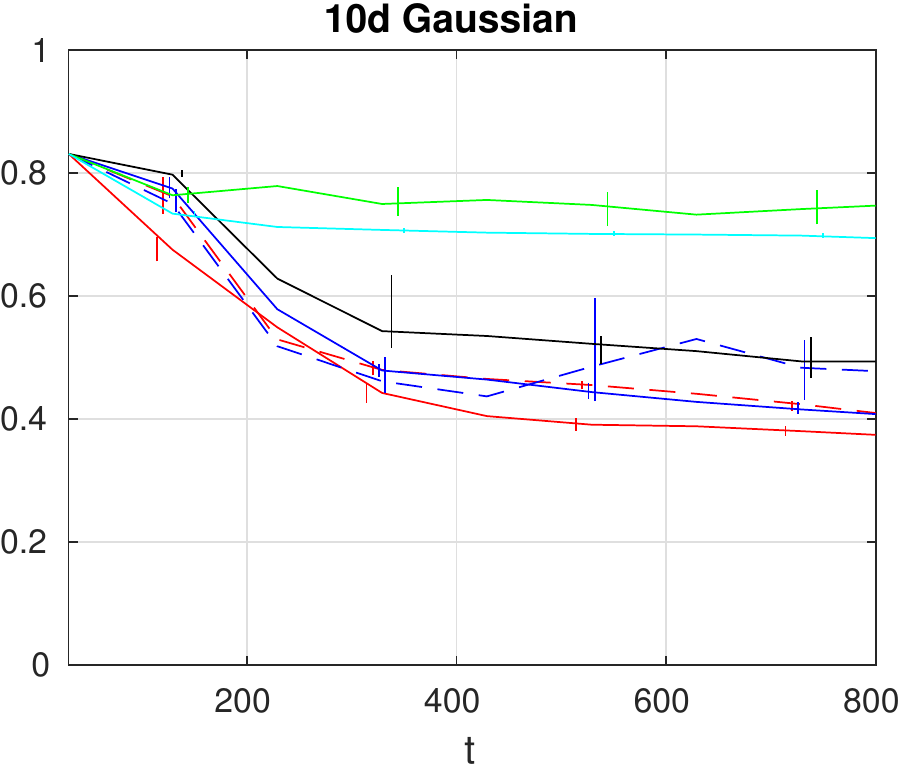}
\end{subfigure}
\caption
{Median of the average marginal TVs between the estimated {\color{\revcol} ABC posterior and the corresponding} true posterior over 100 experiments in the 3d, 6d and 10d Gaussian toy simulation model.
} \label{fig:high_dim}
\end{figure*}


\subsection{Realistic simulation models} \label{sec:real_example}

We consider the Lotka-Volterra model and a model of bacterial infections in day care centers to illustrate the proposed acquisition methods in practical modelling situations.


\subsubsection{Lotka-Volterra model}

The Lotka-Volterra (LV) model \citep{Toni2009} is described by differential equations $x_1'(t) = \theta_1 x_1(t) - x_1(t) x_2(t)$ and $x_2'(t) = \theta_2 x_1(t) x_2(t) - x_2(t)$, where $x_1(t)$ and $x_2(t)$ describe the evolution of prey and predator populations as a function of time $t$, respectively, and $\Btheta=(\theta_1,\theta_2)$ is the unknown parameter to be estimated. 
We use a similar experiment design as in \citet{Toni2009} but with discrepancy {\color{\revcol}$\Delta_{\Btheta} = \log\sum_{ij}(x_j^{\text{obs}}(t_i) - x_j^{\text{mod}}(t_i,\Btheta))^2$}, where $x_j^{\text{obs}}(t_i)$ for $j\in\{1,2\}${\color{\revcol}, $i\in\{1,\ldots,8\}$} denote the noisy observations at times $t_i$, and $x_j^{\text{mod}}(t_i,\Btheta)$ are the corresponding predictions. {\color{\revcol} Up to the log transformation, this is the same as used by \citet{Toni2009}. We also experimented with another discrepancy, where the squared differences were replaced by absolute differences; however, the results were similar.} In comparisons we use the uniform prior with support on $[0,5]^2$, and the reference is the exact posterior distribution that can be computed analytically. We set $t_0 = 10$. The threshold is set to match the smallest observed discrepancy realisations and the integration and sampling required by \expintvar{} and \randmaxvar{} are done as in Section \ref{sec:synth_experiments}. 

The results are presented in Figure \ref{fig:lv}. 
We see that the \expintvar{} strategy produces the best posterior mean estimates (Figure \ref{fig:lv}a), while the best posterior variance estimates are obtained by LCB and \randmaxvar{} (Figure \ref{fig:lv}b). However, the \expintvar{} strategy clearly produces the most accurate posterior approximations in terms of TV distance followed by \randmaxvar{} and the LCB strategy (Figure \ref{fig:lv}c). 

\begin{figure*}
\centering
\begin{subfigure}{0.32\textwidth}
\includegraphics[width=\textwidth]{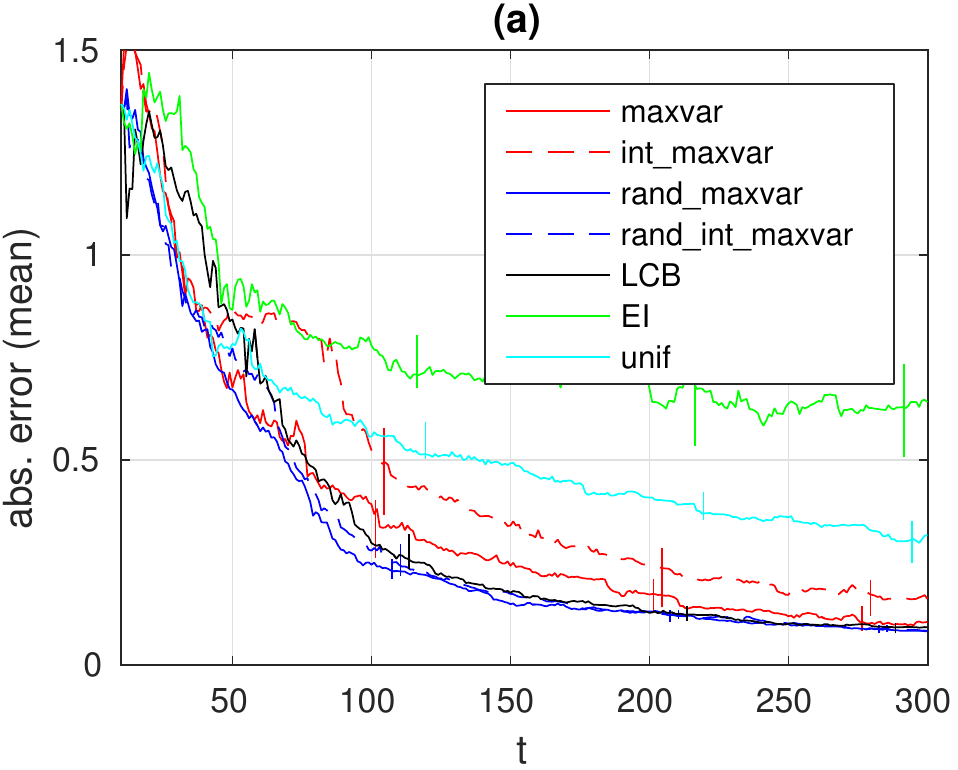}
\end{subfigure}
\begin{subfigure}{0.32\textwidth}
\includegraphics[width=\textwidth]{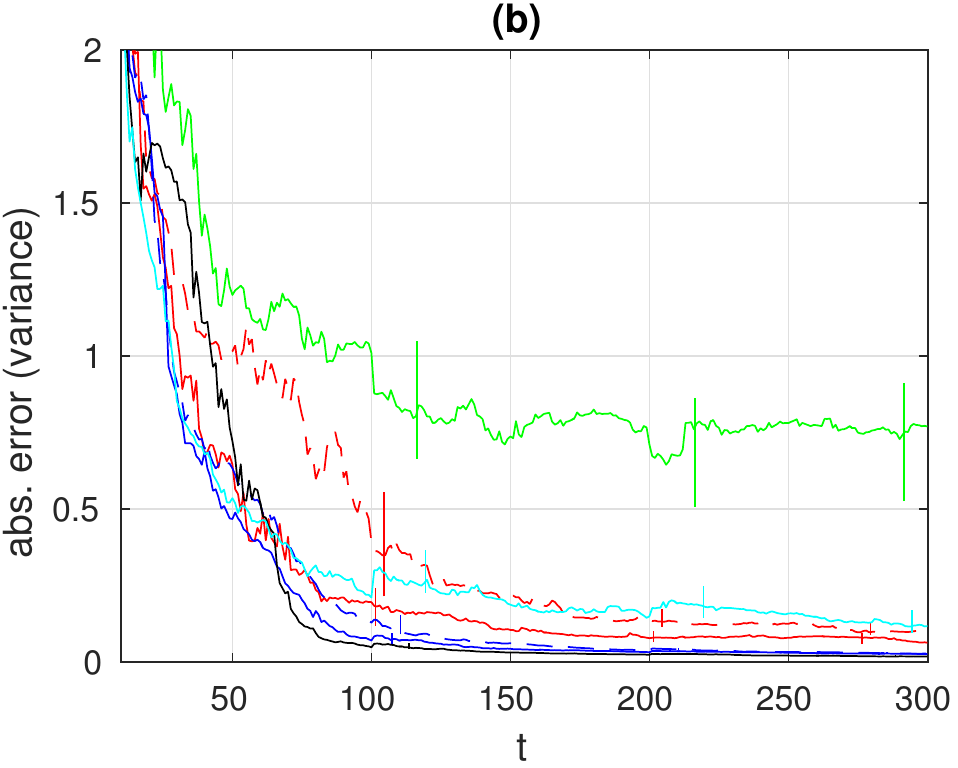}
\end{subfigure}
\begin{subfigure}{0.32\textwidth}
\includegraphics[width=\textwidth]{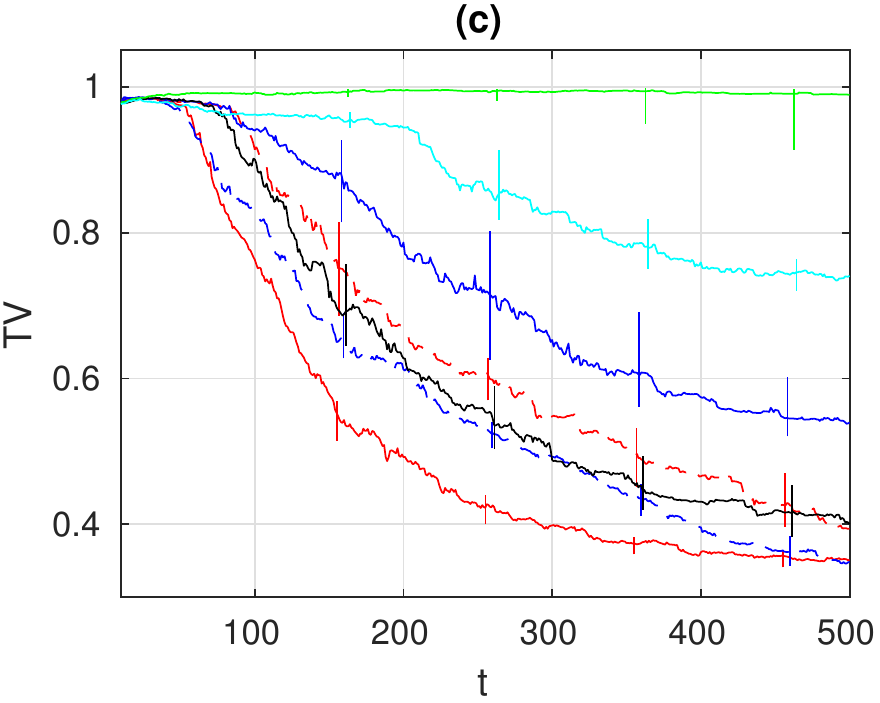}
\end{subfigure}
\caption
{Median of the mean absolute error in the (a) {\color{\revcol}ABC} posterior mean, and (b) {\color{\revcol}ABC} posterior variance {\color{\revcol}as compared to the true posterior} over 100 experiments in the Lotka-Volterra model. Panel (c) shows the TV distance between the estimated {\color{\revcol}ABC posterior} and the true posterior. 
} \label{fig:lv}
\end{figure*}


\subsubsection{Bacterial infections model}

{\color{\revcol} 
Finally, we show the promise of our method using a simulation model that describes transmission dynamics of bacterial infections in day care centers. 
The model has three parameters: an internal infection parameter $\beta \in [0,11]$, an external infection parameter $\Lambda \in [0,2]$ and a co-infection parameter $\theta \in [0,1]$.
Full details of the model and data are described in \citet{Numminen2013}. 
The true posterior is not available and thus an ABC posterior computed using PMC-ABC algorithm, which required over two million simulations, is used as the reference distribution \citep{Numminen2013}.
We use the same experimental setup and discrepancy as \citet{Gutmann2015}, who used the model to illustrate their approach. Specifically, the initial training data size is $t_0 = 20$ and the uniform prior is used. Adaptive MCMC is again used to sample from the model-based posterior estimates and from the probability density $\pi_q(\Btheta)$. For \expintvar{} we use $s=500$ importance samples.

Figure \ref{fig:bacterial} shows the results. 
Unlike in the other test cases, \expintvar{} and \randmaxvar{} tend to produce slightly wider credible intervals for the marginal ABC posterior distributions than the other methods. 
Similarly, \citet{Gutmann2015} obtained conservative estimates of these credible intervals with their stochastic variant of the LCB acquisition rule. 
To explain this, we investigated the GP modelling assumptions in more detail. 
Running a high number of additional bacterial model simulations indicates that the discrepancy is well approximated with a Gaussian in the modal area. On the other hand, the variance of the discrepancy, represented by the noise parameter $\sigma_n^2$ in the GP model, is not exactly constant as assumed in the GP but grows towards the tail areas. This explains why the \expintvar{} and \randmaxvar{} strategies, that tend to include more evaluations in the tails than the other methods, have a tendency to over-estimate the value of $\sigma_n^2$. This further causes slightly overestimated credible intervals for the marginal ABC posterior distribution, as illustrated by \citet{Jarvenpaa2016}. We can also see that the \expintvar{} strategy is more stable than the other methods in the sense that its results are more consistent over the $100$ different realisations of the initial training data sets and simulator outputs than those of the other methods. 
}

\begin{figure*}
\centering
\includegraphics[width=0.97\textwidth]{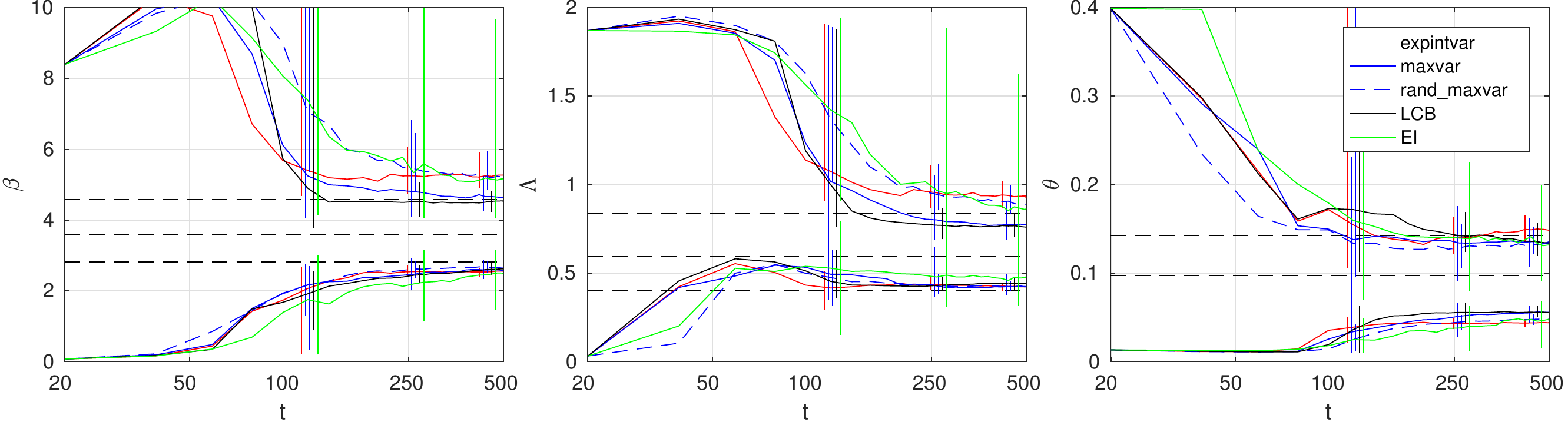}
\caption
{Comparison of the 95\% credible interval estimates in the bacterial model. The black dashed lines show {\color{\revcol} the corresponding credible interval estimates (computed using over 2 million model simulations with ABC-PMC algorithm)} by \citet{Numminen2013} and the vertical lines show the 75\% interval of the realisations over 100 experiments. The x-axis shows the iterations $t$ on the log-scale. 
} \label{fig:bacterial}
\end{figure*}

{\color{\revcol} 
The \maxvar{} and (deterministic) LCB produce credible interval estimates that are overall closest to the ABC posterior computed in \citet{Numminen2013}. However, the credible region for the parameter $\Lambda$ is often underestimated possibly due to excess exploitation producing too small variance estimates $\sigma_n^2$. 
Using input-dependent GP model as in \citet{Jarvenpaa2016} would likely improve the approximation quality but we do not investigate this possibility here. 
While the EI strategy appears to work well on average in this example, it actually has high variability and occasionally produces too narrow posterior estimates and thus performs poorly overall. 
Comparing our posterior approximations using both the proposed acquisition methods as well as the (deterministic) LCB strategy to those in \citet{Gutmann2015} shows that our estimates, both \expintvar{} and (deterministic) LCB, are more accurate than the experiment reported in their paper. 

In summary, despite some violations of the GP model assumptions, we were able to obtain posterior estimates that were very similar to those presented in \citet{Numminen2013} with only a fraction of simulations (500 vs 2,000,000), and without a need to use a computer cluster. We also showed that the proposed methods, especially \expintvar{}, work consistently over different simulation model realisations, which is important with any realistic model where extensive running times may prohibit proper assessment of stochastic variability.  
}

\section{Discussion} \label{sec:discussion}

In this section we offer guidelines for potential users and discuss some additional details of our algorithms. 
%
While the developed methods worked well, it may not be clear for an end user which method to use in practice. 
{\color{\revcol} First of all, if the likelihood can be evaluated, there is usually no reason to consider ABC. Furthermore, if this is not the case but the simulation is fast, e.g.~less than a second, standard ABC techniques may suffice. If the simulation is slow then the techniques in this paper become useful. }
Our technique with the \expintvar{} strategy has a sound Bayesian decision theoretic basis and it performed the best overall, producing consistent approximation quality in different scenarios. We thus recommend this strategy. 
%
However, \expintvar{} required up to 1 minute of computation time for selecting the next evaluation location in our 6d Gaussian test problem while this optimisation step took up to 4s for UCB and EI, and at most 8s for \maxvar{}. The corresponding time for \randmaxvar{} was 30s. In our 2d Gaussian case, the \expintvar{} strategy required at most 10s and all the other methods less than a second. These values are, however, descriptive since the computational time depends on various settings and the amount of training data. All these times are in any case negligible compared to the run time of many realistic simulation models which can be hours or days. In the supplementary material we {\color{\revcol} provide further analysis of computation time using Big O notation and we further} consider an alternative approach where a non-uniform acceptance threshold is used which allows for slightly faster computations.
However, in high dimensions, when $p \gtrsim 10$, we recommend \maxvar{} because we expect the estimated ABC posterior uncertainty to be inflated then anyway.

While not designed for ABC, the LCB criterion still worked surprisingly well overall and offers another reasonable choice in practice.
However, standard LCB is not suitable if the prior is informative and its accuracy deteriorated in the high-dimensional experiment. 
EI (and PI) performed poorly and we see little reason to use them unless the goal is to learn only the maximiser of the discrepancy. 
%
Furthermore, unlike the standard BO strategies such as LCB \citep[with the exception of][]{Shahriari2016}, the developed acquisition rules do not necessarily require a box-constrained domain. Namely, if the prior is a bounded and proper density (to ensure that the acquisition functions are bounded and the ABC posterior defines a valid pdf), the requirement of the bounded support can be relaxed.

In addition to the acquisition strategy, the posterior approximation quality also depends on the GP model and some other choices. 
For example, the proposed acquisition strategies and the final posterior estimate depend on the threshold. 
We either assumed this value to be known or used a heuristic approach and set the threshold to the $0.01$th quantile of the realised discrepancies. We also considered other choices but this approach worked well. In principle, the strategy for selecting the threshold could also vary during the iterations. 
While some ABC methods bypass selecting the threshold, they may not be applicable when the budget for simulations is very small. Our framework is also applicable for model-based ABC methods that do not require the threshold. 


We used the zero mean GP model in our experiments. While \citet{Wilkinson2014,Drovandi2015b,Gutmann2015} considered certain parametric mean functions which might help focusing the simulations on the modal area, our choice is a safe option. Namely, if there is a large region containing no simulations, the discrepancy tends to zero there. Thus, the uncertainty will be high in the region, attracting future simulations. 
Futhermore, even though in some studies \citep[e.g.][]{Snoek2012}, the Matern kernel has been empirically shown to perform slightly better than the squared exponential, we expect the discrepancy in many ABC modelling applications to be smooth and, consequently, used the squared exponential kernel. 


To demonstrate the framework, we chose to model the discrepancy with a GP. However, this approach may not be optimal if the Gaussianity assumption is violated \citep{Gutmann2015,Jarvenpaa2016}. 
Non-Gaussian measurement models can be used but the acquisition criteria in Equation (\ref{eq:post_var}) or (\ref{eq:expintvar}) may become costly to evaluate. 
One could also model the log-likelihood directly with a GP and select the evaluations at the maximiser of the variance of the likelihood function \citep{Kandasamy2015}, which is similar to our \maxvar{} criterion. One could also model the individual summaries with independent GPs as in \cite{Jabot2014,Meeds2014}. In both of these cases the evaluation locations could be chosen based on the ideas in Section \ref{sec:problem}.

An alternative to the proposed stochastic acquisition rule is to sample new evaluation locations from the current ABC posterior estimate. This approach seems to work well in some scenarios but no systematic comparison was done. 
However, the posterior estimate could get stuck to a poor region due to an ``unlucky'' discrepancy realisation, after which new evaluations would be focused on this seemingly good region and the method has little chance to escape from the local optimum. 

While our approach is designed for fitting costly simulation models, we note that it can be useful even when the simulation model is relatively cheap to run. For example, for a developer of a simulation model, it may be useful to first obtain rough estimates for the model parameters before using costly computations for final and accurate results. 
Our derivations are also applicable for estimating the tail probabilities of Gaussian processes over some parameter domain. An approach similar to ours has also been applied to the problem of estimating an excursion set by \citet{Chevalier2014}. However, the objective of their work is to identify the set of points that are below a fixed threshold instead of learning the corresponding tail probability under GP surrogate model assumptions.

\section{Conclusions} \label{sec:conclusions}

We considered the challenging problem of performing Bayesian inference when the likelihood function cannot be evaluated and simulating data from the statistical model is costly. 
We proposed to use another instance of Bayesian inference to quantify the uncertainty in the approximate posterior due to the limited budget of simulations and to design the simulations to minimise the expected uncertainty in the posterior approximation. Such computations can be costly themselves but we chose a loss function that measures such uncertainty and allowed developing a tractable and practical algorithm for selecting the next evaluation location to run the simulation model. Notably, compared to many realistic simulation models, the run time of which can be hours or days, the computational overhead introduced by our approach is negligible. 
%
Experiments demonstrated that the proposed method performs better than or similarly to the commonly used Bayesian optimisation strategies and other, more heuristic approaches obtained as a by-product of our derivations. Our approach also takes prior density into account, does not require box-constrained parameter spaces and has a sound decision-theoretic basis that extends to other ABC surrogate modelling scenarios beyond those considered in this article. 

%
%
As future work, other surrogate models and principled approaches for selecting the threshold could be investigated. 
%
We here focused on single acquisitions but our approach in principle extends to batch acquisitions as well. This enables parallelised inference, which is particularly useful when computationally very costly simulation models need to be fitted.

\subsection*{Acknowledgements}

This work was funded by the Academy of Finland (grants no. 286607 and 294015 to PM). We acknowledge the computational resources provided by Aalto Science-IT project.

{
\subsection*{References}
\renewcommand{\section}[2]{}
\bibliography{abc_acquisition_new_v2_arxiv.bib}

\begin{thebibliography}{56}
\providecommand{\natexlab}[1]{#1}
\providecommand{\url}[1]{\texttt{#1}}
\expandafter\ifx\csname urlstyle\endcsname\relax
  \providecommand{\doi}[1]{doi: #1}\else
  \providecommand{\doi}{doi: \begingroup \urlstyle{rm}\Url}\fi

\bibitem[Beaumont et~al.(2002)Beaumont, Zhang, and Balding]{Beaumont2002}
M.~A. Beaumont, W.~Zhang, and D.~J. Balding.
\newblock {Approximate Bayesian computation in population genetics}.
\newblock \emph{Genetics}, 162\penalty0 (4):\penalty0 2025--2035, 2002.

\bibitem[Beaumont et~al.(2009)Beaumont, Cornuet, Marin, and
  Robert]{Beaumont2009}
M.~A. Beaumont, J.-M. Cornuet, J.-M. Marin, and C.~P. Robert.
\newblock {Adaptive approximate Bayesian computation}.
\newblock \emph{Biometrika}, 96\penalty0 (4):\penalty0 983--990, 2009.

\bibitem[Bijl et~al.(2016)Bijl, Sch\"{o}n, van Wingerden, and
  Verhaegen]{Bijl2016}
H.~Bijl, T.~B. Sch\"{o}n, J.~van Wingerden, and M.~Verhaegen.
\newblock {A sequential Monte Carlo approach to Thompson sampling for Bayesian
  optimization}.
\newblock Available at \url{https://arxiv.org/abs/1604.00169}, 2016.

\bibitem[Blum(2010)]{Blum2010}
M.~G.~B. Blum.
\newblock {Approximate Bayesian Computation: a nonparametric perspective}.
\newblock \emph{Journal of American Statistical Association}, 105\penalty0
  (491):\penalty0 1178--1187, 2010.

\bibitem[Blum and Fran{\c{c}}ois(2010)]{Blum2010b}
M.~G.~B. Blum and O.~Fran{\c{c}}ois.
\newblock {Non-linear regression models for Approximate Bayesian Computation}.
\newblock \emph{Statistics and Computing}, 20\penalty0 (1):\penalty0 63--73,
  2010.

\bibitem[Blum et~al.(2013)Blum, Nunes, Prangle, and Sisson]{Blum2013}
M.~G.~B. Blum, M.~A. Nunes, D.~Prangle, and S.~A. Sisson.
\newblock {A comparative review of dimension reduction methods in approximate
  Bayesian computation}.
\newblock \emph{Statistical Science}, 28\penalty0 (2):\penalty0 189--208, 2013.

\bibitem[Chevalier et~al.(2014)Chevalier, Bect, Ginsbourger, Vazquez, Picheny,
  and Richet]{Chevalier2014}
C.~Chevalier, J.~Bect, D.~Ginsbourger, E.~Vazquez, V.~Picheny, and Y.~Richet.
\newblock {Fast Parallel Kriging-Based Stepwise Uncertainty Reduction With
  Application to the Identification of an Excursion Set}.
\newblock \emph{Technometrics}, 56\penalty0 (4):\penalty0 455--465, 2014.

\bibitem[Diggle and Gratton(1984)]{Diggle1984}
P.~J. Diggle and R.~J. Gratton.
\newblock {Monte Carlo Methods of Inference for Implicit Statistical Models}.
\newblock \emph{Journal of the Royal Statistical Society. Series B
  (Methodological)}, 46\penalty0 (2):\penalty0 193--227, 1984.

\bibitem[Drovandi et~al.(2015)Drovandi, Moores, and Boys]{Drovandi2015b}
C.~C. Drovandi, M.~T. Moores, and R.~J. Boys.
\newblock {Accelerating pseudo-marginal MCMC using Gaussian processes}.
\newblock Available at \url{http://eprints.qut.edu.au/90973/}. Accessed
  11-9-2017, 2015.

\bibitem[Fan et~al.(2013)Fan, Nott, and Sisson]{Fan2013}
Y.~Fan, D.~J. Nott, and S.~A. Sisson.
\newblock {Approximate Bayesian computation via regression density estimation}.
\newblock \emph{Stat}, 2\penalty0 (1):\penalty0 34--48, 2013.

\bibitem[Fearnhead and Prangle(2012)]{Fearnhead2012}
P.~Fearnhead and D.~Prangle.
\newblock {Constructing summary statistics for approximate Bayesian
  computation: Semi-automatic approximate Bayesian computation}.
\newblock \emph{Journal of the Royal Statistical Society. Series B: Statistical
  Methodology}, 74\penalty0 (3):\penalty0 419--474, 2012.

\bibitem[Gonz{\'a}lez et~al.(2016)Gonz{\'a}lez, Osborne, and
  Lawrence]{gonzalez2015glasses}
J.~Gonz{\'a}lez, M.~Osborne, and N.~D. Lawrence.
\newblock {GLASSES: Relieving The Myopia Of Bayesian Optimisation}.
\newblock In \emph{Proceedings of the Nineteenth International Workshop on
  Artificial Intelligence and Statistics}, 2016.

\bibitem[Gutmann and Corander(2016)]{Gutmann2015}
M.~U. Gutmann and J.~Corander.
\newblock {Bayesian optimization for likelihood-free inference of
  simulator-based statistical models}.
\newblock \emph{Journal of Machine Learning Research}, 17\penalty0
  (125):\penalty0 1--47, 2016.

\bibitem[Gutmann et~al.(2017)Gutmann, Dutta, Kaski, and Corander]{Gutmann2017}
M.~U. Gutmann, R.~Dutta, S.~Kaski, and J.~Corander.
\newblock Likelihood-free inference via classification.
\newblock \emph{Statistics and Computing}, March 2017.

\bibitem[Hartig et~al.(2011)Hartig, Calabrese, Reineking, Wiegand, and
  Huth]{Hartig2011}
F.~Hartig, J.~M. Calabrese, B.~Reineking, T.~Wiegand, and A.~Huth.
\newblock {Statistical inference for stochastic simulation models--theory and
  application.}
\newblock \emph{Ecology Letters}, 14\penalty0 (8):\penalty0 816--27, 2011.

\bibitem[Hennig and Schuler(2012)]{Hennig2012}
P.~Hennig and C.~J. Schuler.
\newblock {Entropy Search for Information-Efficient Global Optimization}.
\newblock \emph{Journal of Machine Learning Research}, 13\penalty0
  (1999):\penalty0 1809--1837, 2012.

\bibitem[Hennig et~al.(2015)Hennig, Osborne, and Girolami]{Hennig2015}
P.~Hennig, M.~A. Osborne, and M.~Girolami.
\newblock {Probabilistic numerics and uncertainty in computations.}
\newblock \emph{Proceedings of the Royal Society of London A: Mathematical,
  Physical and Engineering Sciences}, 471\penalty0 (2179):\penalty0 20150142,
  2015.

\bibitem[Hern{\'{a}}ndez-Lobato et~al.(2014)Hern{\'{a}}ndez-Lobato, Hoffman,
  and Ghahramani]{HernandezLobato2014}
J.~M. Hern{\'{a}}ndez-Lobato, M.~W. Hoffman, and Z.~Ghahramani.
\newblock {Predictive Entropy Search for Efficient Global Optimization of
  Black-box Functions}.
\newblock \emph{Advances in Neural Information Processing Systems 28}, pages
  1--9, 2014.

\bibitem[Jabot et~al.(2014)Jabot, Lagarrigues, Courbaud, and
  Dumoulin]{Jabot2014}
F.~Jabot, G.~Lagarrigues, B.~Courbaud, and N.~Dumoulin.
\newblock {A comparison of emulation methods for Approximate Bayesian
  Computation}.
\newblock Available at \url{http://arxiv.org/abs/1412.7560}, 2014.

\bibitem[J\"{a}rvenp\"{a}\"{a} et~al.(2017)J\"{a}rvenp\"{a}\"{a}, Gutmann,
  Vehtari, and Marttinen]{Jarvenpaa2016}
M.~J\"{a}rvenp\"{a}\"{a}, M.~Gutmann, A.~Vehtari, and P.~Marttinen.
\newblock {Gaussian process modeling in approximate Bayesian computation to
  estimate horizontal gene transfer in bacteria}.
\newblock Available at \url{https://arxiv.org/abs/1610.06462}, 2017.

\bibitem[Kandasamy et~al.(2015)Kandasamy, Schneider, and
  P{\'{o}}czos]{Kandasamy2015}
K.~Kandasamy, J.~Schneider, and B.~P{\'{o}}czos.
\newblock {Bayesian active learning for posterior estimation}.
\newblock In \emph{International Joint Conference on Artificial Intelligence},
  pages 3605--3611, 2015.

\bibitem[Lenormand et~al.(2013)Lenormand, Jabot, and Deffuant]{Lenormand2013}
M.~Lenormand, F.~Jabot, and G.~Deffuant.
\newblock {Adaptive approximate Bayesian computation for complex models}.
\newblock \emph{Computational Statistics}, 28\penalty0 (6):\penalty0
  2777--2796, 2013.

\bibitem[Lintusaari et~al.(2017)Lintusaari, Gutmann, Dutta, Kaski, and
  Corander]{Lintusaari2016}
J.~Lintusaari, M.~U. Gutmann, R.~Dutta, S.~Kaski, and J.~Corander.
\newblock {Fundamentals and Recent Developments in Approximate Bayesian
  Computation.}
\newblock \emph{Systematic biology}, 66\penalty0 (1):\penalty0 e66--e82, 2017.

\bibitem[Lintusaari et~al.(2018)Lintusaari, Vuollekoski,
  Kangasr\"{a}\"{a}si\"{o}, Skyt\'{e}n, J\"{a}rvenp\"{a}\"{a}, Gutmann,
  Vehtari, Corander, and Kaski]{Lintusaari2017}
J.~Lintusaari, H.~Vuollekoski, A.~Kangasr\"{a}\"{a}si\"{o}, K.~Skyt\'{e}n,
  M.~J\"{a}rvenp\"{a}\"{a}, M.~U. Gutmann, A.~Vehtari, J.~Corander, and
  S.~Kaski.
\newblock {ELFI: Engine for Likelihood Free Inference}.
\newblock \emph{Journal of Machine Learning Research}, 2018.
\newblock {Accepted for publication}.

\bibitem[Marin et~al.(2012)Marin, Pudlo, Robert, and Ryder]{Marin2012}
J.~M. Marin, P.~Pudlo, C.~P. Robert, and R.~J. Ryder.
\newblock {Approximate Bayesian computational methods}.
\newblock \emph{Statistics and Computing}, 22\penalty0 (6):\penalty0
  1167--1180, 2012.

\bibitem[Marjoram et~al.(2003)Marjoram, Molitor, Plagnol, and
  Tavare]{Marjoram2003}
P.~Marjoram, J.~Molitor, V.~Plagnol, and S.~Tavare.
\newblock {Markov chain Monte Carlo without likelihoods.}
\newblock \emph{Proceedings of the National Academy of Sciences of the United
  States of America}, 100\penalty0 (26):\penalty0 15324--8, 2003.

\bibitem[Marttinen et~al.(2015)Marttinen, Gutmann, Croucher, Hanage, and
  Corander]{Marttinen2015}
P.~Marttinen, M.~U. Gutmann, N.~J. Croucher, W.~P. Hanage, and J.~Corander.
\newblock {Recombination produces coherent bacterial species clusters in both
  core and accessory genomes}.
\newblock \emph{Microbial Genomics}, 1\penalty0 (5), 2015.

\bibitem[Meeds and Welling(2014)]{Meeds2014}
E.~Meeds and M.~Welling.
\newblock {GPS-ABC: Gaussian Process Surrogate Approximate Bayesian
  Computation}.
\newblock In \emph{Proceedings of the 30th Conference on Uncertainty in
  Artificial Intelligence}, 2014.

\bibitem[Murray and Adams(2010)]{Murray2010}
I.~Murray and R.~P. Adams.
\newblock {Slice sampling covariance hyperparameters of latent Gaussian
  models}.
\newblock \emph{Advances in Neural Information Processing Systems}, 2\penalty0
  (1):\penalty0 9, 2010.

\bibitem[Numminen et~al.(2013)Numminen, Cheng, Gyllenberg, and
  Corander]{Numminen2013}
E.~Numminen, L.~Cheng, M.~Gyllenberg, and J.~Corander.
\newblock Estimating the transmission dynamics of streptococcus pneumoniae from
  strain prevalence data.
\newblock \emph{Biometrics}, 69\penalty0 (3):\penalty0 748--757, 2013.

\bibitem[Osborne et~al.(2012)Osborne, Duvenaud, Garnett, Rasmussen, Roberts,
  and Ghahramani]{Osborne2012}
M.~A. Osborne, D.~Duvenaud, R.~Garnett, C.~E. Rasmussen, S.~J. Roberts, and
  Z.~Ghahramani.
\newblock {Active Learning of Model Evidence Using Bayesian Quadrature}.
\newblock \emph{Advances in Neural Information Processing Systems 26}, pages
  1--9, 2012.

\bibitem[Owen(1956)]{Owen1956}
D.~B. Owen.
\newblock Tables for computing bivariate normal probabilities.
\newblock \emph{The Annals of Mathematical Statistics}, 27\penalty0
  (4):\penalty0 1075--1090, 12 1956.

\bibitem[Owen(1980)]{Owen1980}
D.~B. Owen.
\newblock A table of normal integrals.
\newblock \emph{Communications in Statistics - Simulation and Computation},
  9\penalty0 (4):\penalty0 389--419, 1980.

\bibitem[Papamakarios and Murray(2016)]{Papamakarios2016}
G.~Papamakarios and I.~Murray.
\newblock {Fast e-free inference of simulation models with Bayesian conditional
  density estimation}.
\newblock In \emph{Advances in Neural Information Processing Systems 29}, 2016.

\bibitem[Patefield and Tandy(2000)]{Patefield2000}
M.~Patefield and D.~Tandy.
\newblock {Fast and accurate Calculation of Owen’s T-Function}.
\newblock \emph{Journal of Statistical Software}, 5\penalty0 (5):\penalty0
  1--25, 2000.

\bibitem[Price et~al.(2017)Price, Drovandi, Lee, and Nott]{Price2016}
L.~F. Price, C.~C. Drovandi, A.~Lee, and D.~J. Nott.
\newblock Bayesian synthetic likelihood.
\newblock Journal of Computational and Graphical Statistics. (In Press), 2017.

\bibitem[Rasmussen(2003)]{Rasmussen2003}
C.~E. Rasmussen.
\newblock {Gaussian Processes to Speed up Hybrid Monte Carlo for Expensive
  Bayesian Integrals}.
\newblock \emph{Bayesian Statistics 7}, pages 651--659, 2003.

\bibitem[Rasmussen and Williams(2006)]{Rasmussen2006}
C.~E. Rasmussen and C.~K.~I. Williams.
\newblock \emph{{Gaussian Processes for Machine Learning}}.
\newblock The MIT Press, 2006.

\bibitem[Robert and Casella(2004)]{Robert2004}
C.~P. Robert and G.~Casella.
\newblock \emph{Monte Carlo Statistical Methods}.
\newblock Springer, New York, second edition, 2004.

\bibitem[Rue et~al.(2009)Rue, Martino, and Chopin]{Rue2009}
H.~Rue, S.~Martino, and N.~Chopin.
\newblock {Approximate Bayesian inference for latent Gaussian models by using
  integrated nested Laplace approximations}.
\newblock \emph{Journal of the Royal Statistical Society. Series B: Statistical
  Methodology}, 71\penalty0 (2):\penalty0 319--392, 2009.

\bibitem[Ryan et~al.(2016)Ryan, Drovandi, Mcgree, and Pettitt]{Ryan2016}
E.~G. Ryan, C.~C. Drovandi, J.~M. Mcgree, and A.~N. Pettitt.
\newblock {A Review of Modern Computational Algorithms for Bayesian Optimal
  Design}.
\newblock \emph{International Statistical Review}, 84\penalty0 (1):\penalty0
  128--154, 2016.

\bibitem[Shahriari et~al.(2015)Shahriari, Swersky, Wang, Adams, and
  de~Freitas]{Shahriari2015}
B.~Shahriari, K.~Swersky, Z.~Wang, R.~P. Adams, and N.~de~Freitas.
\newblock {Taking the human out of the loop: A review of Bayesian
  optimization}.
\newblock \emph{Proceedings of the IEEE}, 104\penalty0 (1), 2015.

\bibitem[Shahriari et~al.(2016)Shahriari, Bouchard-C{\^{o}}t{\'{e}}, and
  de~Freitas]{Shahriari2016}
B.~Shahriari, A.~Bouchard-C{\^{o}}t{\'{e}}, and N.~de~Freitas.
\newblock {Unbounded Bayesian Optimization via Regularization}.
\newblock In \emph{International Conference on Artificial Intelligence and
  Statistics}, 2016.

\bibitem[Siivola et~al.(2017)Siivola, Vehtari, Vanhatalo, and
  Gonz\'{a}lez]{Siivola2017}
E.~Siivola, A.~Vehtari, J.~Vanhatalo, and J.~Gonz\'{a}lez.
\newblock {Bayesian optimization with virtual derivative sign observations}.
\newblock Available at \url{https://arxiv.org/abs/1704.00963}, 2017.

\bibitem[Sisson et~al.(2007)Sisson, Fan, and Tanaka]{Sisson2007}
S.~A. Sisson, Y.~Fan, and M.~M. Tanaka.
\newblock {Sequential Monte Carlo without likelihoods.}
\newblock \emph{Proceedings of the National Academy of Sciences of the United
  States of America}, 104\penalty0 (6):\penalty0 1760--5, 2007.

\bibitem[Snoek et~al.(2012)Snoek, Larochelle, and Adams]{Snoek2012}
J.~Snoek, H.~Larochelle, and R.~P. Adams.
\newblock {Practical Bayesian optimization of machine learning algorithms}.
\newblock In \emph{Advances in Neural Information Processing Systems 25}, pages
  1--9, 2012.

\bibitem[Srinivas et~al.(2010)Srinivas, Krause, Kakade, and
  Seeger]{Srinivas2010}
N.~Srinivas, A.~Krause, S.~Kakade, and M.~Seeger.
\newblock Gaussian process optimization in the bandit setting: No regret and
  experimental design.
\newblock In \emph{Proceedings of the 27th International Conference on
  International Conference on Machine Learning}, pages 1015--1022, 2010.

\bibitem[Toni et~al.(2009)Toni, Welch, Strelkowa, Ipsen, and Stumpf]{Toni2009}
T.~Toni, D.~Welch, N.~Strelkowa, A.~Ipsen, and M.~P.~H. Stumpf.
\newblock {Approximate Bayesian computation scheme for parameter inference and
  model selection in dynamical systems.}
\newblock \emph{Journal of the Royal Society, Interface}, 6\penalty0
  (31):\penalty0 187--202, 2009.

\bibitem[Turner and {Van Zandt}(2012)]{Turner2012}
B.~M. Turner and T.~{Van Zandt}.
\newblock {A tutorial on approximate Bayesian computation}.
\newblock \emph{Journal of Mathematical Psychology}, 56\penalty0 (2):\penalty0
  69--85, 2012.

\bibitem[Vanhatalo et~al.(2010)Vanhatalo, Pietil{\"{a}}inen, and
  Vehtari]{Vanhatalo2010}
J.~Vanhatalo, V.~Pietil{\"{a}}inen, and A.~Vehtari.
\newblock {Approximate inference for disease mapping with sparse Gaussian
  processes}.
\newblock \emph{Statistics in Medicine}, 29\penalty0 (15):\penalty0 1580--1607,
  2010.

\bibitem[Vanhatalo et~al.(2013)Vanhatalo, Riihim{\"{a}}ki, Hartikainen,
  Jyl{\"{a}}nki, Tolvanen, and Vehtari]{Vanhatalo2013}
J.~Vanhatalo, J.~Riihim{\"{a}}ki, J.~Hartikainen, P.~Jyl{\"{a}}nki,
  V.~Tolvanen, and A.~Vehtari.
\newblock {GPstuff: Bayesian modeling with Gaussian processes}.
\newblock \emph{Journal of Machine Learning Research}, 14:\penalty0 1175--1179,
  2013.

\bibitem[Wang and Jegelka(2017)]{Wang2017}
Z.~Wang and S.~Jegelka.
\newblock Max-value entropy search for efficient {B}ayesian optimization.
\newblock In \emph{Proceedings of the 34th International Conference on Machine
  Learning}, pages 3627--3635, 2017.

\bibitem[Wang et~al.(2016)Wang, Zhou, and Jegelka]{Wang2016}
Z.~Wang, B.~Zhou, and S.~Jegelka.
\newblock {Optimization as Estimation with Gaussian Processes in Bandit
  Settings}.
\newblock In \emph{In proceedings of the 19th International Conference on
  Artificial Intelligence and Statistics}, 2016.

\bibitem[Wilkinson(2013)]{Wilkinson2013}
R.~D. Wilkinson.
\newblock {Approximate Bayesian computation (ABC) gives exact results under the
  assumption of model error}.
\newblock \emph{Statistical Applications in Genetics and Molecular Biology},
  12\penalty0 (2):\penalty0 129--141, 2013.

\bibitem[Wilkinson(2014)]{Wilkinson2014}
R.~D. Wilkinson.
\newblock {Accelerating ABC methods using Gaussian processes}.
\newblock In \emph{Proceedings of the 17th International Conference on
  Artificial Intelligence and Statistics}, 2014.

\bibitem[Wood(2010)]{Wood2010}
S.~N. Wood.
\newblock {Statistical inference for noisy nonlinear ecological dynamic
  systems.}
\newblock \emph{Nature}, 466:\penalty0 1102--1104, 2010.

\end{thebibliography}
\bibliographystyle{plainnat}
}

\appendix

\section{Proofs} \label{app:proofs}

We provide the derivations of Lemma \ref{lemma:p_mean_var} and Proposition \ref{prop:expintvar} below.

\begin{proof}[Proof of Lemma \ref{lemma:p_mean_var}]
Using the law of the unconscious statistician, we can write 
\begin{equation}
\mean(\pacc(\Btheta)) = \int_{-\infty}^{\infty} \Phi\left(\frac{\epsilon - f}{\sigma_n}\right) \Normal(f\cond m_{\idxt}(\Btheta), v_{\idxt}^2(\Btheta)) \ud f .
\end{equation}
%
Now, using the fact $\Phi(x) = 1 - \Phi(-x)$, a standard result for Gaussian moments derived in \citet[p.~74]{Rasmussen2006} and $\mean(\pi(\Btheta) \pacc(\Btheta)) = \pi(\Btheta) \mean(\pacc(\Btheta))$, (which holds because the prior density at $\Btheta$ is a scalar) one obtains Equation (\ref{eq:post_mean}). 

A formula for the variance of $\tildepiabc(\Btheta)$ can be obtained similarly. First we see that 
\begin{align}
\Var(\pacc(\Btheta)) &= \mean(\pacc(\Btheta)^2) - [\mean(\pacc(\Btheta))]^2 \\
&= \int_{-\infty}^{\infty} \Phi^2\left(\frac{\epsilon - f}{\sigma_n}\right) \Normal(f\cond m_{\idxt}(\Btheta), v_{\idxt}^2(\Btheta)) \ud f 
- [\mean(\pacc(\Btheta))]^2. \label{eq:var_first_eq}
\end{align}
The first term of Equation (\ref{eq:var_first_eq}) can be further written as 
\begin{align}
\begin{aligned}
&\int_{-\infty}^{\infty} \int_{-\infty}^{\epsilon} \Normal(z_1 \cond f, \sigma_n^2) \ud z_1
\int_{-\infty}^{\epsilon} \Normal(z_2 \cond f, \sigma_n^2) \ud z_2 
\Normal(f \cond m_{\idxt}(\Btheta), v_{\idxt}^2(\Btheta)) \ud f \\
&= \int_{-\infty}^{\epsilon} \int_{-\infty}^{\epsilon} \int_{-\infty}^{\infty} 
\Normal(f \cond z_1, \sigma_n^2) \Normal(f \cond z_2, \sigma_n^2) \Normal(f \cond m_{\idxt}(\Btheta), v_{\idxt}^2(\Btheta)) \ud f \ud z_1 \ud z_2 
\label{eq:tricky_formula}
\end{aligned}
\end{align}
where we have used Fubini-Tonelli theorem to change the order of integration. The integrand can be now written as an unnormalised Gaussian pdf for $f$ and, after integrating over $f$ and some further calculations, the resulting formula can be recognised as 
\begin{align}
\Phi_{2}(\epsilon\onevector \cond m_{\idxt}(\Btheta)\onevector, V_{\idxt}(\Btheta)),
\label{eq:var}
\end{align}
where $\onevector = [1, 1]^T$ and the function $\Phi_2$ denotes the bivariate Gaussian cdf with mean $m_{\idxt}(\Btheta)\onevector$ and  covariance matrix
\begin{equation}
V_{\idxt}(\Btheta) 
= \begin{bmatrix} \sigma_n^2 + v_{\idxt}^2(\Btheta) & v_{\idxt}^2(\Btheta) \\
v_{\idxt}^2(\Btheta) & \sigma_n^2 + v_{\idxt}^2(\Btheta) \end{bmatrix},
\end{equation}
which is clearly symmetric and positive definite since $\sigma_n^2 > 0$.

Denoting the correlation coefficient $\rho(\Btheta) = v^2_{\idxt}(\Btheta)/(\sigma_n^2+v^2_{\idxt}(\Btheta))$, we see that Equation (\ref{eq:var}) can be further written as 
%
\begin{align}
\begin{aligned}
%
&\Phi_{2}\left(\frac{\epsilon - m_{\idxt}(\Btheta)}{\sqrt{\sigma_n^2 + v_{\idxt}^2(\Btheta)}}\onevector \mcond \Bzeros, \begin{bmatrix} 1 & \rho(\Btheta) \\ \rho(\Btheta) & 1 \end{bmatrix} \right) \\
%
&= \Phi\left(\frac{\epsilon-m_{\idxt}(\Btheta)}{\sqrt{\sigma_n^2 + v_{\idxt}^2(\Btheta)}}\right) 
- 2T\left( \frac{\epsilon - m_{\idxt}(\Btheta)}{\sqrt{\sigma_n^2 + v_{\idxt}^2(\Btheta)}}, \frac{1-\rho(\Btheta)}{\sqrt{1-\rho^2(\Btheta)}} \right) 
%
%
\end{aligned}
\end{align}
where $\Bzeros = [0,0]^T$. The 
equality follows from the connection between bivariate Gaussian cdf and Owen's t-function, see \citet{Owen1956} for this fact and its proof. 
Also, simple calculations show that
\begin{equation}
\frac{1-\rho(\Btheta)}{\sqrt{1-\rho^2(\Btheta)}} 
%
%
= \frac{\sigma_n}{\sqrt{\sigma_n^2 + 2v_{\idxt}^2(\Btheta)}}.
\end{equation}
The rest of the result now follows from the derivations above, Equation (\ref{eq:post_mean}) and the fact $\Var(\pi(\Btheta) \pacc(\Btheta)) = \pi^2(\Btheta) \Var(\pacc(\Btheta))$. 
\end{proof}

\begin{proof}[Proof of Proposition \ref{prop:expintvar}]
%
We first derive the probability densities for the GP mean and covariance function after a future observation is obtained. These quantities are random variables because the new discrepancy realisation $\Delta^{*}$ is unknown.
Given the training data $D_{\idxt} = \{(\Delta_i,\Btheta_i)\}_{i=1}^t$, the mean function $m_{1:t+1}(\Btheta)$ can be written as 
\begin{align}
m_{1:t+1}(\Btheta) &= [k_{\Btheta,\Btheta_{\idxt}}, k_{\Btheta,\Btheta^{*}}] \begin{bmatrix} K(\Btheta_{\idxt}) & k(\Btheta_{\idxt},\Btheta^{*}) \\ k(\Btheta^{*},\Btheta_{\idxt}) & K(\Btheta^{*}) \end{bmatrix}^{-1} \begin{bmatrix} \Delta_{\idxt} \\ \Delta^{*} \end{bmatrix} \\ 
%
%
&= k(\Btheta,\Btheta_{\idxt})K^{-1}(\Btheta_{\idxt})\Delta_{\idxt} + [k(\Btheta,\Btheta^{*}) - k(\Btheta,\Btheta_{\idxt})K^{-1}(\Btheta_{\idxt})k(\Btheta_{\idxt},\Btheta^{*})] 
\nonumber \\
&\qquad \cdot [K(\Btheta^{*}) - k(\Btheta^{*},\Btheta_{\idxt})K^{-1}(\Btheta_{\idxt})k(\Btheta_{\idxt},\Btheta^{*})]^{-1}
[\Delta^{*} - k(\Btheta^{*},\Btheta_{\idxt})K^{-1}(\Btheta_{\idxt})\Delta_{\idxt}] \nonumber \\
%
%
&= m_{\idxt}(\Btheta) + \cov_{\idxt}(\Btheta,\Btheta^{*})(v^2_{\idxt}(\Btheta^{*}) + \sigma_n^2)^{-1}(\Delta^{*} - m_{\idxt}(\Btheta^{*})),
\end{align}
where we have used a well-known formula for blockwise inversion. 
According to the current GP model, the unknown future discrepancy $\Delta^{*}$ follows a Gaussian density i.e.~$\Delta^{*}\cond\Btheta^{*},D_{\idxt} \sim \Normal(m_{\idxt}(\Btheta^{*}), v^2_{\idxt}(\Btheta^{*}) + \sigma_n^2)$ so that
\begin{align}
m_{\idxtp}(\Btheta) &\sim \Normal(m_{\idxt}(\Btheta), \Deltav_{\idxt}^2(\Btheta,\Btheta^{*})), \label{eq:gp_upd_mean_sim}
%
\end{align}
where we have denoted $\Deltav_{\idxt}^2(\Btheta,\Btheta^{*}) = {\cov_{\idxt}^2(\Btheta,\Btheta^{*})}/{(\sigma_n^2 + v_{\idxt}^2(\Btheta^{*}))}$.

Similar computations as for the mean function show that 
\begin{equation}
v^2_{\idxtp}(\Btheta) = v^2_{\idxt}(\Btheta) - \Deltav_{\idxt}^2(\Btheta,\Btheta^{*}). \label{eq:gp_upd_cov}
\end{equation}
This formula further shows that the change in the GP variance is deterministic and depends only on the chosen evaluation location $\Btheta^{*}$.

Changing the order of integration using Tonelli theorem we obtain 
\begin{align}
L_{\idxt}(\Btheta^{*}) &= \mean_{\Delta^{*}\cond \Btheta^{*}, D_{\idxt}} \int_{\Theta} \pi^2(\Btheta) \Var(\pacc(\Btheta) \cond \Delta^{*},\Btheta^{*},D_{\idxt}) \ud \Btheta \\
&= \int_{-\infty}^{\infty} \int_{\Theta} \pi^2(\Btheta) \Var(\pacc(\Btheta) \cond \Delta^{*},\Btheta^{*},D_{\idxt}) \ud \Btheta \Normal(\Delta^{*} \cond m_{\idxt}(\Btheta^{*}), \sigma_n^2 + v^2_{\idxt}(\Btheta^{*})) \ud \Delta^{*} \\
&= \int_{\Theta} \pi^2(\Btheta) \underbrace{\int_{-\infty}^{\infty} \Var(\pacc(\Btheta) \cond \Delta^{*},\Btheta^{*},D_{\idxt}) \Normal(\Delta^{*} \cond m_{\idxt}(\Btheta^{*}), \sigma_n^2 + v^2_{\idxt}(\Btheta^{*})) \ud \Delta^{*}}_{ =: g_{\idxtp}^2(\Btheta,\Btheta^{*})} \ud \Btheta \\
&= \int_{\Theta} \pi^2(\Btheta) g_{\idxtp}^2(\Btheta,\Btheta^{*}) \ud \Btheta,
\end{align}
%
%
To simplify the notation in the following formulas, we define $m_0 := m_{\idxt}(\Btheta)$, $m_1 := m_{\idxtp}(\Btheta)$ and similarly for the variance terms $v^2_0$ and $v^2_1$. We also define $\Deltav_0^2(\Btheta^{*}) := \Deltav^2_{\idxt}(\Btheta,\Btheta^{*})$. Using these conventions and Equations (\ref{eq:gp_upd_mean_sim}) and (\ref{eq:gp_upd_cov}) we see that
\begin{align}
%
&g_{\idxtp}^2(\Btheta,\Btheta^{*}) \nonumber \\ &\myquad= \int_{0}^{\infty}\int_{-\infty}^{\infty} \Var(\pacc(\Btheta,m_1,v^2_1)) \Normal(m_1 \cond m_0, \Deltav_0^2(\Btheta^{*})) 
\delta(v^2_1 - v^2_0 + \Deltav_0^2(\Btheta^{*})) \ud m_1 \ud v^2_1 \\
%
&\myquad= \int_{-\infty}^{\infty} \Phi\left(\frac{\epsilon - m_1}{\sqrt{\sigma_n^2 + v_1^2 - \Deltav_0^2(\Btheta^{*})}}\right) - \Phi^2\left(\frac{\epsilon - m_1}{\sqrt{\sigma_n^2 + v_1^2 - \Deltav_0^2(\Btheta^{*})}}\right) \Normal(m_1 \cond m_0, \Deltav_0^2(\Btheta^{*})) \ud m_1 \label{eq:easy_terms} \\
%
&\myquad\myquad- \frac{1}{\pi} \int_{-\infty}^{\infty} \int_{0}^{\frac{\sigma_n}{\sqrt{\sigma_n^2 + 2v_1^2 - 2\Deltav_0^2(\Btheta^{*})}}} \frac{\e^{-\frac{1}{2}\left(\frac{\epsilon - m_1}{\sqrt{\sigma_n^2 + v_1^2- \Deltav_0^2(\Btheta^{*})}}\right)^2(1 + x^2)}}{1 + x^2} \ud x \Normal(m_1 \cond m_0, \Deltav_0^2(\Btheta^{*})) \ud m_1 \label{eq:owen_integral}
\end{align}
%

The integral of Equation (\ref{eq:easy_terms}) can be computed using the equations in the proof of Lemma \ref{lemma:p_mean_var}
and after some straightforward computations one obtains 
\begin{equation}
2T\left(\frac{\epsilon - m_0}{\sqrt{\sigma_n^2 + v_0^2}}, 
\sqrt\frac{\sigma_n^2 + v_0^2(\Btheta) - \Deltav_0^2(\Btheta^{*})}{\sigma_n^2 + v_0^2(\Btheta) + \Deltav_0^2(\Btheta^{*})}\right).
\end{equation}
%

To simplify Equation (\ref{eq:owen_integral}), we use again the Fubini-Tonelli theorem to change the order of integration and some straightforward (but tedious) manipulations to obtain 
\begin{align}
&-\frac{1}{\pi} \int_{0}^{\frac{\sigma_n}{\sqrt{\sigma_n^2 + 2v_1^2 - 2\Deltav_0^2(\Btheta^{*})}}} 
\int_{-\infty}^{\infty} \frac{\e^{-\frac{1}{2}\frac{(\epsilon - m_1)^2{(1 + x^2)}}{\sigma_n^2 + v_1^2 - \Deltav^2_0(\Btheta^{*})}}}{1 + x^2} \Normal(m_1 \cond m_0, \Deltav_0^2(\Btheta^{*})) \ud m_1 \ud x \\
&= -\frac{1}{\pi} \int_{0}^{\frac{\sigma_n}{\sqrt{\sigma_n^2 + 2v_1^2 - 2\Deltav_0^2(\Btheta^{*})}}} \int_{-\infty}^{\infty} \frac{\sqrt{2\pi}c(x)}{1+x^2} \Normal(m_1 \cond \epsilon, c^2(x)) \Normal(m_1 \cond m_0, \Deltav_0^2(\Btheta^{*})) \ud m_1 \ud x \\
&= -\frac{1}{\pi} \int_{0}^{\frac{\sigma_n}{\sqrt{\sigma_n^2 + 2v_1^2 - 2\Deltav_0^2(\Btheta^{*})}}} \frac{1}{1+x^2} \sqrt{\frac{\sigma_n^2 + v_0^2 - \Deltav_0^2(\Btheta^{*})}{\sigma_n^2 + v_0^2 + x^2 \Deltav_0^2(\Btheta^{*})}} \e^{-\frac{(\epsilon - m_1)^2(1+x^2)}{\sigma_n^2 + v_0^2 + x^2 \Deltav_0^2(\Btheta^{*})}} \ud x 
\label{eq:complicated_eq}
\end{align}
where we have defined $c(x) := (\sigma_n^2 + v_1^2 - \Deltav^2_0(\Btheta^{*}))/(1+x^2)$ and used again the well-known product rule for Gaussian pdfs.

Next we define the transformation 
$\psi(z) := z\sqrt{\sigma_n^2 + v_0^2}/\sqrt{\sigma_n^2 + v_0^2 - \Deltav^2_0(\Btheta^{*})(1+z^2)}$. 
Some analysis shows that 
$\psi'(z) = \sqrt{\sigma_n^2 + v_0^2}\sqrt{\sigma_n^2 + v_0^2 - \Deltav^2_0(\Btheta^{*})}/(\sigma_n^2 + v_0^2 - \Deltav^2_0(\Btheta^{*})(1+z^2))^{3/2} > 0 $
so that $\psi$ is strictly increasing function, and that it maps the interval $[0,{\sigma_n}/\sqrt{\sigma_n^2 + 2v_0^2}]$ to $[0,{\sigma_n}/{\sqrt{\sigma_n^2 + 2v_1^2 - 2\Deltav_0^2(\Btheta^{*})}}]$. 
Substituting $x = \psi(z)$ to the integral in Equation (\ref{eq:complicated_eq}) and some straightforward computations show that Equation (\ref{eq:complicated_eq}) simplifies to 
\begin{align}
-\frac{1}{\pi}\int_0^{\frac{\sigma_n}{\sqrt{\sigma_n^2 + 2v_0^2}}} \frac{1}{1+z^2} \e^{-\frac{(\epsilon - m_0)^2(1+z^2)}{2(\sigma_n^2 + v_0^2)}} \ud z 
= - 2T\left( \frac{\epsilon - m_0}{\sqrt{\sigma_n^2 + v_0^2}}, \frac{\sigma_n}{\sqrt{\sigma_n^2 + 2v_0^2}} \right).
\end{align}
The final result (\ref{eq:expintvar}) now follows from the equations above. 
\end{proof}

\section{Additional derivations and gradients}


\subsection{Additional derivations} \label{app:derivations}

We start by deriving the cdf for the random variable $\pacc(\Btheta)$ when the uncertainty in the GP hyperparameters $\Bphi$ is taken into account. The corresponding results for $\tildepiabc(\Btheta)$ follow by suitable scaling with the prior pdf $\pi(\Btheta)$. The cdf of $\pacc(\Btheta)$ evaluated at $z\in(0,1)$ is
\begin{align}
\begin{split}
F_{\pacc(\Btheta)}(z) &= \prob(\pacc(\Btheta) \leq z) 
= \int \prob(\pacc(\Btheta) \leq z \cond \Bphi) \pi(\Bphi) \ud \Bphi 
= \int \prob\left(\Phi\left(\frac{\epsilon - f(\Btheta)}{\sigma_n}\right) \leq z \mcond \Bphi\right) \pi(\Bphi) \ud \Bphi \\
&= \int \prob\left(f(\Btheta)) \geq \epsilon - \sigma_n\Phi^{-1}(z) \mcond \Bphi\right) \pi(\Bphi) \ud \Bphi
= \int \Phi\left( \frac{\sigma_n\Phi^{-1}(z) + m_{\idxt}(\Btheta\cond\Bphi) - \epsilon}{v_{\idxt}(\Btheta\cond\Bphi)} \right) \pi(\Bphi) \ud \Bphi,
\end{split}
\end{align}
it is zero if $z \leq 0$, and one if $z \geq 1$. In above, the density $\pi(\Bphi)$ describes our knowledge about the GP hyperparameters $\Bphi$ given the training data $D_{\idxt}$ (conditioning on data is ignored to simplify notation). The integral in the equations above is taken over the domain of the GP hyperparameters $\Bphi$. The integral can be approximated using e.g.~CCD as discussed in the main text. 
If the hyperparameters are fixed and $\pi_{\Bphi}(\Bphi)$ is replaced with a point mass, one obtains Equation (\ref{eq:p_cdf}).

A formula for the pdf can be obtained by differentiating the cdf. We first realise that
%
$z = \Phi(\Phi^{-1}(z))$ which implies $1 = \frac{\ud z}{\ud z} = \frac{\ud}{\ud z} \Phi(\Phi^{-1}(z)) = \Phi'(\Phi^{-1}(z))(\Phi^{-1})'(z)$ 
%
for $z\in(0,1)$. This fact is used to further show that  
\begin{align}
(\Phi^{-1})'(z) = \frac{1}{\Phi'(\Phi^{-1}(z))} = \frac{1}{\Normal(\Phi^{-1}(z)\cond 0,1)} 
= \sqrt{2\pi}\e^{(\Phi^{-1}(z))^2/2}. \label{eq:dinvphi}
\end{align}
Using the Equation (\ref{eq:dinvphi}) allows to compute 
\begin{align}
\pi_{\pacc(\Btheta) \cond \Bphi}(z) &= \frac{\partial}{\partial z} F_{\pacc(\Btheta)\cond \Bphi}(z) 
= \frac{1}{\sqrt{2\pi}} \e^{-\frac{(\sigma_n\Phi^{-1}(z) + m_{\idxt}(\Btheta\cond\Bphi) - \epsilon)^2}{2v_{\idxt}^2(\Btheta\cond\Bphi)}} \frac{\partial}{\partial z} \frac{\sigma_n\Phi^{-1}(z) + m_{\idxt}(\Btheta\cond\Bphi) - \epsilon}{v_{\idxt}(\Btheta\cond\Bphi)} \\
&= \frac{\sigma_n}{v_{\idxt}(\Btheta\cond\Bphi)} \e^{\frac{(\Phi^{-1}(z))^2}{2} - \frac{(\sigma_n\Phi^{-1}(z) + m_{\idxt}(\Btheta\cond\Bphi) - \epsilon)^2}{2v_{\idxt}^2(\Btheta\cond\Bphi)}} \\
&= \begin{cases}
\frac{\sigma_n}{v_{\idxt}(\Btheta\cond\Bphi)} \e^{\frac{(\epsilon - m_{\idxt}(\Btheta\cond\Bphi))^2}{2(\sigma_n^2 - v_{\idxt}^2(\Btheta\cond\Bphi))}} \e^{ -\frac{\sigma_n^2 - v_{\idxt}^2(\Btheta\cond\Bphi)}{2v_{\idxt}^2(\Btheta\cond\Bphi)}\left(\Phi^{-1}(z) - \frac{(\epsilon - m_{\idxt}(\Btheta\cond\Bphi))\sigma_n}{\sigma_n^2 - v_{\idxt}^2(\Btheta\cond\Bphi)}\right)^2}, 
& \text{if } \sigma_n \neq v_{\idxt}(\Btheta\cond\Bphi), \\
\e^{-\frac{(\epsilon - m_{\idxt}(\Btheta\cond\Bphi))^2}{2v_{\idxt}^2(\Btheta\cond\Bphi)}} \e^{\frac{\epsilon - m_{\idxt}(\Btheta\cond\Bphi)}{v_{\idxt}(\Btheta\cond\Bphi)}\Phi^{-1}(z)}, 
& \text{if } \sigma_n = v_{\idxt}(\Btheta\cond\Bphi), 
\end{cases} 
\label{eq:pdf_final}
\end{align}
for $z\in(0,1)$ and it is zero elsewhere. Finally, the pdf is obtained by marginalising the GP hyperparameters, that is 
\begin{align}
\pi_{\pacc(\Btheta)}(z) = \int \pi_{\pacc(\Btheta) \cond \Bphi}(z) \pi_{\Bphi}(\Bphi) \ud \Bphi. 
\label{eq:pdf_final2}
\end{align}

The mean and variance for $\tildepiabc(\Btheta)$ were presented in Section \ref{subsec:uncertainty} and the corresponding results for $\pacc(\Btheta)$ follow by setting $\pi(\Btheta) = 1$. 
The quantiles can be computed as in Equation (\ref{eq:quantile}). If the uncertainty in the GP hyperparameters is taken into account, then numerical root finding such as bisection search is required for inverting the cdf. 
%

Inspecting the pdf given by Equation (\ref{eq:pdf_final}) shows that if $\sigma_n > v_{\idxt}(\Btheta\cond\Bphi)$, then the mode of $\pacc(\Btheta)\cond\Bphi$ is at $z = \Phi((\epsilon - m_{\idxt}(\Btheta\cond\Bphi))/(\sigma_n - v_{\idxt}^2(\Btheta\cond\Bphi)/\sigma_n))$. Unsurprisingly, if $m_{\idxt}(\Btheta\cond\Bphi)$ is large enough, then there is a mode near $z=0$. 
However, if $\sigma_n = v_{\idxt}(\Btheta\cond\Bphi)$ and $m_{\idxt}(\Btheta\cond\Bphi) > \epsilon$, then the pdf goes to infinity as $z \rightarrow 0^{+}$. Interestingly, if $\sigma_n < v_{\idxt}(\Btheta\cond\Bphi)$, then the pdf goes to infinity both as $z \rightarrow 0^{+}$ and $z \rightarrow 1^{-}$. 

%
%
%
%


\subsection{Gradient of the expected integrated variance acquisition function} \label{sec:expintvar_gradient}

We outline the derivation for the gradient of the expected integrated variance acquisition function (Equation (\ref{eq:expintvar})) with respect to the candidate evaluation location $\Btheta^{*}$. We consider only the case where either a point estimate or a fixed value is used for the GP hyperparameters $\Bphi$. First we define 
\begin{equation}
c(\Btheta,\Btheta^*) 
:= \sqrt\frac{\sigma_n^2 + v_{\idxt}^2(\Btheta) - \Deltav_{\idxt}^2(\Btheta,\Btheta^{*})}{\sigma_n^2 + v_{\idxt}^2(\Btheta) + \Deltav_{\idxt}^2(\Btheta,\Btheta^{*})}.
\end{equation}
Because the second term in Equation (\ref{eq:expintvar}) is constant with respect to $\Btheta^{*}$, we obtain
\begin{align}
\frac{\partial}{\partial \Btheta^{*}} L_{\idxt}(\Btheta^{*}) 
&= 2 \; \frac{\partial}{\partial \Btheta^{*}} \int_{\Theta} \pi^2(\Btheta) T\left(\frac{\epsilon - m_{\idxt}(\Btheta)}{\sqrt{\sigma_n^2 + v_{\idxt}^2(\Btheta)}},
c(\Btheta,\Btheta^*) \right) \ud \Btheta \\
%
&= \frac{1}{\pi} \int_{\Theta} \pi^2(\Btheta) \frac{\partial}{\partial \Btheta^{*}} \int_0^{c(\Btheta,\Btheta^*)} \frac{\e^{-\frac{(\epsilon - m_{1:t}(\Btheta))^2}{2(\sigma_n^2 + v^2_{1:t}(\Btheta))}(1+x^2)}}{1+x^2} \ud x \ud \Btheta \\
%
&= \frac{1}{\pi} \int_{\Theta} \pi^2(\Btheta) \frac{\e^{-\frac{(\epsilon - m_{1:t}(\Btheta))^2}{2(\sigma_n^2 + v^2_{1:t}(\Btheta))}(1+c^2(\Btheta,\Btheta^*))}}{1+c^2(\Btheta,\Btheta^*)} \frac{\partial}{\partial \Btheta^{*}} c(\Btheta,\Btheta^*) \ud \Btheta,
\end{align}
where we have used the Leibnitz integration rule twice and where the integrations are applied elementwise. 
Differentiating $c(\Btheta,\Btheta^*)$ and some further computations produce %
%
%
\begin{equation}
\frac{\partial}{\partial \Btheta^{*}} L_{\idxt}(\Btheta^{*}) 
= -\frac{1}{2\pi} \int_{\Theta} \frac{\pi^2(\Btheta) \e^{-\frac{(\epsilon - m_{\idxt}(\Btheta))^2}{\sigma_n^2 + v^2_{\idxt}(\Btheta) + \Deltav^2_{\idxt}(\Btheta,\Btheta^{*})}}}{\sqrt{\sigma_n^2 + v^2_{\idxt}(\Btheta) + \Deltav^2_{\idxt}(\Btheta,\Btheta^{*})}\sqrt{\sigma_n^2 + v^2_{\idxt}(\Btheta) - \Deltav^2_{\idxt}(\Btheta,\Btheta^{*})}}
\frac{\partial}{\partial \Btheta^{*}} \Deltav^2_{\idxt}(\Btheta,\Btheta^{*})
\ud \Btheta, 
\label{eq:expintvar_grad_main}
\end{equation}
where 
\begin{align}
\frac{\partial}{\partial \Btheta^{*}} \Deltav^2_{\idxt}(\Btheta,\Btheta^{*}) 
&= \frac{2\cov_{\idxt}(\Btheta,\Btheta^{*})}{\sigma_n^2 + v_{\idxt}^2(\Btheta^{*})} \frac{\partial}{\partial \Btheta^{*}} \cov_{\idxt}(\Btheta,\Btheta^{*})
- \frac{\cov^2_{\idxt}(\Btheta,\Btheta^{*})}{(\sigma_n^2 + v_{\idxt}^2(\Btheta^{*}))^2} \frac{\partial}{\partial \Btheta^{*}} v^2_{\idxt}(\Btheta^{*}), \label{eq:grad_dgpvar} \\
\frac{\partial}{\partial \Btheta^{*}} \cov_{\idxt}(\Btheta,\Btheta^{*}) 
&= \frac{\partial}{\partial \Btheta^{*}} k(\Btheta,\Btheta^{*}) - k(\Btheta,\Btheta_{\idxt})K^{-1}(\Btheta_{\idxt},\Btheta_{\idxt}) \frac{\partial}{\partial \Btheta^{*}} k(\Btheta_{\idxt},\Btheta^{*}). 
\label{eq:grad_gpcov}
\end{align}
The integral in Equation (\ref{eq:expintvar_grad_main}) can be approximated similarly as discussed in Section \ref{subsec:acquisition_rule}.

\subsection{Gradient of the maxvar acquisition function} \label{app:derivations2}

We compute the gradient of the maxvar acquisition function with respect to the parameter vector $\Btheta$. We take into account the uncertainty in the GP hyperparameters but if a point estimate is used instead, the formulae can be simplified by ignoring the summations, setting $\omega^i = \omega^1 = 1$ and replacing $\Bphi^i$ with the point estimate. First we denote
\begin{align}
I_1(\Btheta) &= \sum_{i} \omega^i \Phi(a(\Btheta,\Bphi^i)) - \left[\sum_{i} \omega^i \, \Phi(a(\Btheta,\Bphi^i)) \right]^{2}, \\
I_2(\Btheta) &= \frac{1}{\pi} \sum_{i} \omega^i \int_0^{b(\Btheta,\Bphi^i)} \frac{\e^{-\frac{1}{2}a^2(\Btheta,\Bphi^i)(1+x^2)}}{1+x^2} \ud x, 
\end{align}
where 
\begin{align}
a(\Btheta,\Bphi^i) = \frac{\epsilon - m_{\idxt}(\Btheta\cond \Bphi^i)}{\sqrt{(\sigma_n^i)^2 + v_{\idxt}^2(\Btheta\cond \Bphi^i)}}, 
\quad b(\Btheta,\Bphi^i) = \frac{\sigma_n^i}{\sqrt{(\sigma_n^i)^2 + 2v_{\idxt}^2(\Btheta\cond \Bphi^i)}},
\label{eq:ab}
\end{align}
so that
\begin{align}
\pi^2(\Btheta)\Var(\pacc(\Btheta)) 
&\approx \pi^2(\Btheta)(I_1(\Btheta) - I_2(\Btheta)). \label{eq:pi2var}
\end{align}
Differentiating Equation (\ref{eq:pi2var}) with respect to the parameter vector $\Btheta$ yields
\begin{align}
\frac{\partial}{\partial \Btheta}\pi^2(\Btheta)\Var(\pacc(\Btheta)) 
&\approx 2\pi(\Btheta)(I_1(\Btheta) - I_2(\Btheta))\frac{\partial \pi(\Btheta)}{\partial \Btheta} + \pi^2(\Btheta)\left(\frac{\partial I_1(\Btheta)}{\partial \Btheta} - \frac{\partial I_2(\Btheta)}{\partial \Btheta}\right). 
\end{align}
Computing the derivatives of $I_1$ produces
\begin{align}
\frac{\partial I_1(\Btheta)}{\partial \Btheta} 
&= \sum_{i} \omega^i \frac{\partial}{\partial \Btheta} \Phi(a(\Btheta,\Bphi^i)) - 2 \left( \sum_{i} \omega^i \Phi(a(\Btheta,\Bphi^i)) \right) \left( \sum_{i} \omega^i \frac{\partial}{\partial \Btheta} \Phi(a(\Btheta,\Bphi^i)) \right) \nonumber \\
&= \left( 1 - 2\sum_{i} \omega^i \Phi(a(\Btheta,\Bphi^i)) \right) \sum_{i} \omega^i \frac{\partial}{\partial \Btheta} \Phi(a(\Btheta,\Bphi^i)) \nonumber \\
&= \left( 1 - 2\sum_{i} \omega^i \Phi(a(\Btheta,\Bphi^i)) \right) \sum_{i} \frac{\omega^i \e^{-\frac{1}{2}a^2(\Btheta,\Bphi^i)}}{\sqrt{2\pi}} \frac{\partial a(\Btheta,\Bphi^i)}{\partial \Btheta}.
\end{align}

Using the Leibniz integration rule, the gradient of $I_2$ can be written as 
\begin{align}
\frac{\partial I_2(\Btheta)}{\partial \Btheta} &= \frac{1}{\pi} \sum_{i} \omega^i \left( \frac{\e^{-\frac{1}{2}a^2(\Btheta,\Bphi^i)(1+b^2(\Btheta,\Bphi^i))}}{1+b^2(\Btheta,\Bphi^i)}\frac{\partial b(\Btheta,\Bphi^i)}{\partial \Btheta} 
+ \int_{0}^{b(\Btheta,\Bphi^i)} \frac{1}{1+x^2} \frac{\partial}{\partial \Btheta} \e^{-\frac{1}{2}a^2(\Btheta,\Bphi^i)(1+x^2)} \ud x \right), 
\label{eq:I2_first} 
%
\end{align}
where the integration is applied elementwise. 
The second term in Equation (\ref{eq:I2_first}) can be further simplified as 
\begin{align}
&\int_{0}^{b(\Btheta,\Bphi^i)} \frac{1}{1+x^2} \frac{\partial}{\partial \Btheta} \e^{-\frac{1}{2}a^2(\Btheta,\Bphi^i)(1+x^2)} \ud x \nonumber \\
&= - \int_{0}^{b(\Btheta,\Bphi^i)} a(\Btheta,\Bphi^i) \frac{\partial a(\Btheta,\Bphi^i)}{\partial \Btheta} \e^{-\frac{1}{2}a^2(\Btheta,\Bphi^i)(1+x^2)} \ud x \nonumber \\
&= - a(\Btheta,\Bphi^i) \frac{\partial a(\Btheta,\Bphi^i)}{\partial \Btheta} \e^{-\frac{1}{2}a^2(\Btheta,\Bphi^i)} \int_{0}^{b(\Btheta,\Bphi^i)} \e^{-\frac{1}{2}a^2(\Btheta,\Bphi^i) x^2} \ud x \nonumber \\
&= - \sqrt{2\pi} \, \frac{\partial a(\Btheta,\Bphi^i)}{\partial \Btheta} \e^{-\frac{1}{2}a^2(\Btheta,\Bphi^i)} \left(\Phi(a(\Btheta,\Bphi^i)b(\Btheta,\Bphi^i)) - \Phi(0)\right) \nonumber \\ 
&= \sqrt{\pi/2} \, \e^{-\frac{1}{2}a^2(\Btheta,\Bphi^i)} \left(1 - 2\Phi(a(\Btheta,\Bphi^i)b(\Btheta,\Bphi^i))\right) \frac{\partial a(\Btheta,\Bphi^i)}{\partial \Btheta}, \label{eq:sec_term_final}
\end{align}
%
where on the third line we have recognised the integrand as an unnormalised Gaussian pdf. 
%
%
Finally, straightforward calculations show that 
\begin{align}
\frac{\partial a(\Btheta,\Bphi^i)}{\partial \Btheta} 
&= -\frac{1}{\sqrt{(\sigma^i_n)^2 + v_{\idxt}^2(\Btheta,\Bphi^i)}} \frac{\partial m_{\idxt}(\Btheta,\Bphi^i)}{\partial \Btheta} - \frac{\epsilon - m_{\idxt}(\Btheta,\Bphi^i)}{2((\sigma^i_n)^2 + v_{\idxt}^2(\Btheta,\Bphi^i))^{3/2}} \frac{\partial v_{\idxt}^2(\Btheta,\Bphi^i)}{\partial \Btheta}, \label{eq:dadtheta} \\ 
\frac{\partial b(\Btheta,\Bphi^i)}{\partial \Btheta} 
&= -\frac{\sigma_n^i}{((\sigma^i_n)^2 + 2v_{\idxt}^2(\Btheta,\Bphi^i))^{3/2}} \frac{\partial v_{\idxt}^2(\Btheta,\Bphi^i)}{\partial \Btheta}. \label{eq:dbdtheta}
\end{align}
Gradients of the GP mean and variance functions $m$ and $v^2$ depend on the chosen covariance function and are not shown here.

\subsection{Gradient of the ABC posterior approximation} \label{app:derivatives}

The gradient of the posterior approximation with respect to parameter $\Btheta$ is useful if e.g.~Hamiltonian Monte Carlo algorithm is used to sample from this density. The unnormalised approximate posterior is 
\begin{align}
\tildepiabc(\Btheta \cond x_{obs}) 
= \pi(\Btheta) \, \Phi(a(\Btheta))
%
\label{eq:post_approx_appendix} 
\end{align}
where, as earlier, $\pi(\Btheta)$ denotes the prior density and $a$ is defined as in Equation (\ref{eq:ab}). 
Differentiating the logarithm of Equation (\ref{eq:post_approx_appendix}) yields
\begin{equation}
\begin{aligned}
\frac{\partial }{\partial \Btheta} \log \tildepiabc(\Btheta \cond x_{obs}) 
&= \frac{\partial \log \pi(\Btheta)}{\partial \Btheta} + \frac{\partial }{\partial \Btheta} \log \Phi(a(\Btheta)) 
%
= \frac{\partial \log \pi(\Btheta)}{\partial \Btheta} + \frac{\e^{-\frac{1}{2}a^2(\Btheta)}}{\sqrt{2\pi} \, \Phi(a(\Btheta))} \frac{\partial a(\Btheta)}{\partial \Btheta},
\end{aligned}
\end{equation}
where the gradient of the term $a$ can be computed as in Equation (\ref{eq:dadtheta}). 

\section{Additional details and experiments} \label{app:exp_details}

{\color{\revcol} In this section we first briefly discuss the computational cost of our algorithm using Big O notation and then we provide some further details and results of our experiments. 

We consider the cost of evaluating the \expintvar{} acquisition function at $b$ arbitrary points $\Btheta^{*}_1,\ldots,\Btheta^{*}_b$ at the iteration $t$ of Algorithm~\ref{alg:gp_abc_alg}. Computing the Cholesky factorisation on line 8 of Algorithm~\ref{alg:gp_abc_alg} requires $\bigO(t^3)$. Generating $s$ samples from the proposal $\pi_q$ on line 9 requires computing GP mean and variance functions for each proposed point by the MCMC algorithm and the total cost is $\bigO(st^2)$ when the precomputed Cholesky is used. We can reuse the GP mean and variance function values and line 10 thus requires $\bigO(\tilde{s})$ where $\tilde{s}$ is the amount of thinned samples that are used for approximating the integral in Equation~(\ref{eq:expintvar}). We obviously have $\tilde{s} \leq s$. (In fact, the second term of Equation~(\ref{eq:expintvar}) is constant with respect to $\Btheta^*$ and we may not need to evaluate it. However, from the previous discussion we can see that the resulting saving would be small i.e.~only $\bigO(\tilde{s})$.) Computing the value of Equation (\ref{eq:dgpvar}) at the $\tilde{s}$ sampled locations and for all $b$ values of $\Btheta^*$ can be seen to be $\bigO(b\tilde{s}t + bt^2)$ when we use the already computed values of $k(\Btheta,\Btheta_{\idxt})K^{-1}(\Btheta_{\idxt})$ in the formula of $\cov_{1:t}(\Btheta,\Btheta^*)$. Computing the integral over the first term then requires $\bigO(\tilde{s})$. 
The total cost is thus $\bigO(t^3 + st^2 + b(t^2 + \tilde{s}t))$. 
If we use grid approximation instead of importance sampling to approximate the integral in Equation~(\ref{eq:expintvar}), we obtain the total cost by setting $s=\tilde{s}$ where $s$ is now the number of grid points. 

We obtain $\bigO(t^3 + bt^2)$ bound for the cost of LCB, \maxvar{} and \expdiffvar{} and we see that the $st^2 + b\tilde{s}t$ term is missing as compared to the corresponding bound of the \expintvar{} acquisition function. There is no acquisition function to evaluate (or optimise) for \randmaxvar{} rule since we choose the next point directly sampling from a density. This can be done in $\bigO(t^3 + st^2)$. Finally, we want to emphasise that this analysis is asymptotic and in practice the constant costs of \expintvar{} and its variants due to the need to compute e.g.~the Owens t-function and cdf of the standard Gaussian are slightly higher as compared to e.g.~LCB rule. However, in practice all these GP computation times are negligible when the simulation time dominates the computation.
}

{\color{\revcol} In the rest of this section we show further details and results of the experiments.}
The synthetic test problems in Section \ref{sec:synth_experiments} are designed in the following way. 
In the ``unimodal'' example, the mean of the discrepancy is $m(\Btheta) = 3\sigma + \Btheta^T \BS \Btheta$, where $\sigma$ is the standard deviation of the additive Gaussian noise, $\BS_{11} = \BS_{22} = 1$ and $\BS_{12} = \BS_{21} = 0.5$. 
In the ``bimodal'' example we use 
$m(\Btheta) = 3\sigma + 0.2(\theta_2-\theta_1^2)^2 + 0.75(\theta_2-\theta_1-2)^2$.
In the unidentifiable example, the mean of the discrepancy is obtained as $m(\Btheta) = 3\sigma + 0.01\theta_1^2 + \theta_2^2$. 
The ``banana'' example is produced using $m(\Btheta) = 3\sigma + (1-\theta_1)^2 + 10(\theta_2 - \theta_1^2)^2$. The discrepancy is assumed to follow the Gaussian density, that is, $\Delta_{\Btheta} \sim \Normal(m(\Btheta),\sigma^2)$. 
The resulting probability densities with $\sigma = 2$ are also illustrated in Figure \ref{fig:synth_problems}. 

We present additional results for the 2d experiments in Section \ref{sec:synth_experiments}. The settings are the same except that the threshold is not predefined but is set to the $0.01$th quantile of the realised discrepancies, and updated constantly during the acquisitions. Consequently, the selection of the evaluation locations also affects how the threshold is chosen. These results are shown in Figure \ref{fig:synth_eps_q001}. 
Figure \ref{fig:synth_eps_q005} shows the corresponding results when the threshold is determined using the 0.05th quantile. 

\begin{figure*}[ht]
\centering
\includegraphics[width=0.65\textwidth]{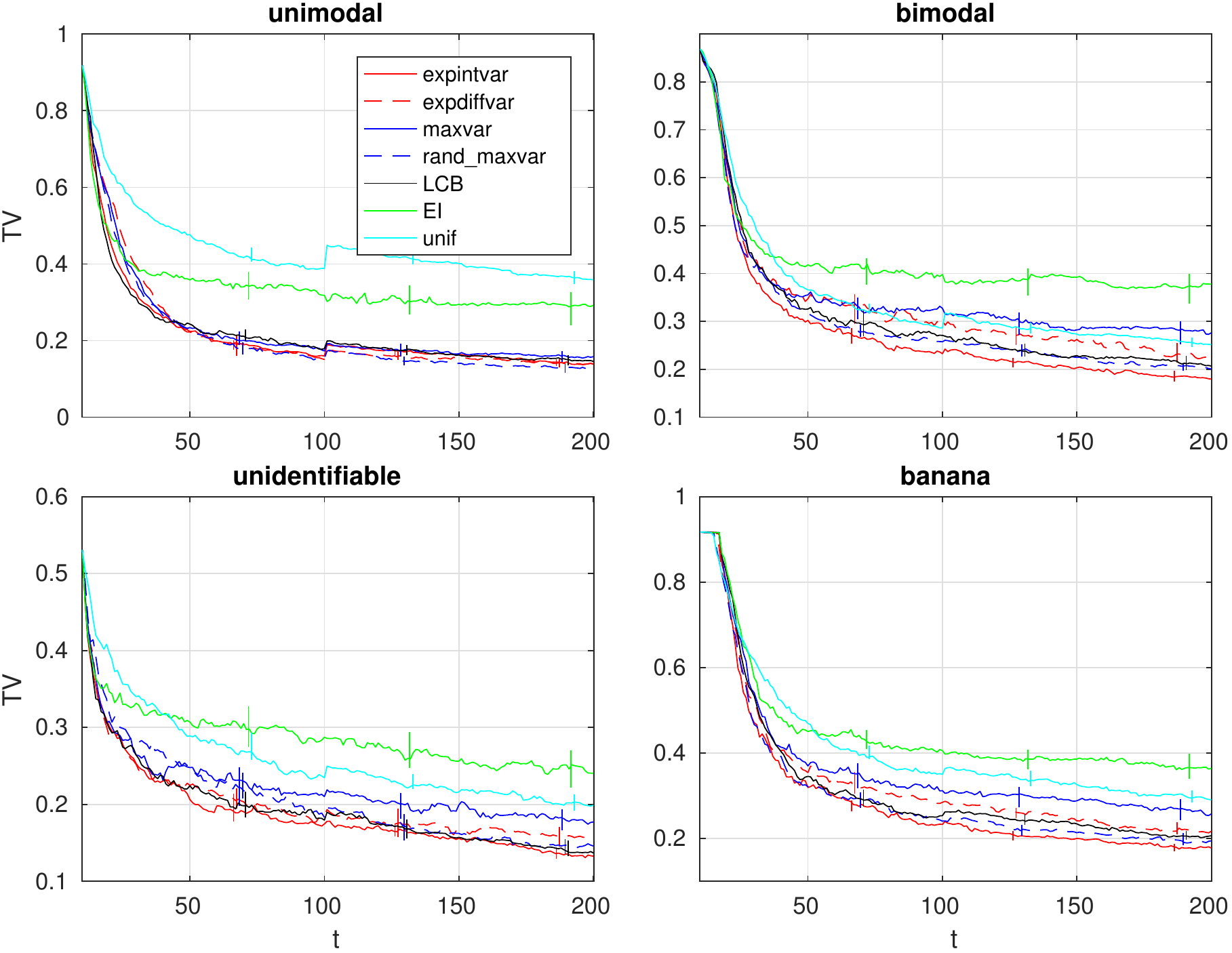}
\caption{Median of the TV distance between the estimated and the true {\color{\revcol} ABC} posterior over 100 experiments. The 0.01th quantile is used for updating the threshold. The results are similar as in Figure \ref{fig:synth_eps_fixed}.}
\label{fig:synth_eps_q001}
\end{figure*}

\begin{figure*}[ht]
\centering
\includegraphics[width=0.65\textwidth]{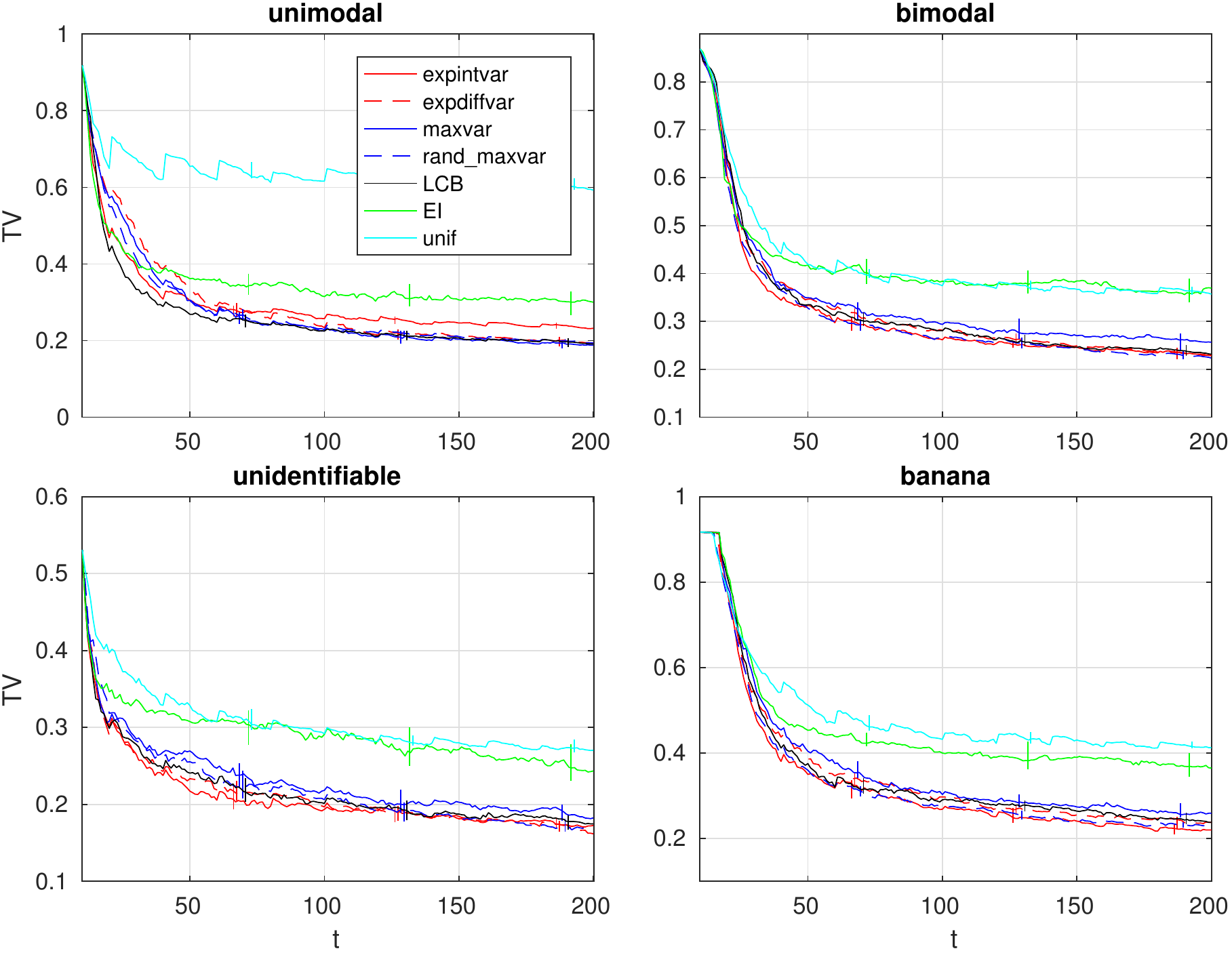}
\caption{Median of the TV distance between the estimated and the true {\color{\revcol} ABC} posterior over 100 experiments. These experiments are as in Figure \ref{fig:synth_eps_q001} except that the 0.05th quantile is used. Larger threshold than in Figure \ref{fig:synth_eps_q001} generally produces slightly worse posterior estimates.}
\label{fig:synth_eps_q005}
\end{figure*}

\section{Non-uniform acceptance threshold} \label{eq:gaussian_kernel}

Instead of using the uniform (i.e.~``0-1'') threshold
$\pi_{\epsilon}(\Bx_{\text{obs}}\cond\Bx) \propto \indic_{\Delta(\Bx_{obs},\Bx) \leq \epsilon}$, 
other choices are possible. For instance, one can use ``Gaussian threshold'' 
$\pi_{\epsilon}(\Bx_{\text{obs}}\cond\Bx) \propto \Normal({\Delta(\Bx_{\text{obs}},\Bx) \cond m_{\epsilon},\sigma_{\epsilon}^2})$,
where the threshold $\epsilon$ is replaced by two new parameters $m_\epsilon$ and $\sigma_{\epsilon}^2$ that control the quality of the ABC approximation. 
%
The unnormalised ABC posterior approximation at $\Btheta$ is then given by 
\begin{align}
\tildepiNabc(\Btheta) 
&= \pi(\Btheta) \int_{-\infty}^{\infty} \Normal(\Delta \cond m_{\epsilon}, \sigma_{\epsilon}^2) \Normal(\Delta \cond f(\Btheta), \sigma_n^2) \ud \Delta \\
&= \pi(\Btheta) \Normal(f(\Btheta) \cond m_{\epsilon},\sigma^2_{\epsilon} + \sigma^2_n).
\end{align}
This approach can be seen as an approximation to the uniform threshold but it could be interpreted also as additional Gaussian measurement (or modelling) error as described by \citet{Wilkinson2013}.

Proceeding similarly as in the proof of Lemma \ref{lemma:p_mean_var} and using Gaussian identities in the appendix of \citet{Rasmussen2006} to compute the required integrals, the expectation and variance of $\tildepiNabc$ can be shown to be 
\begin{align}
\mean(\tildepiNabc(\Btheta) \cond D_{1:t}) 
&= \pi(\Btheta) \Normal(m_{\epsilon} \cond m_{\idxt}(\Btheta), \sigma^2 + v_{\idxt}^2(\Btheta)), \\
\Var(\tildepiNabc(\Btheta) \cond D_{1:t}) 
&= \frac{\pi^2(\Btheta)}{2\sqrt{\pi \sigma^2}} \Normal\left(m_\epsilon \mcond m_{\idxt}(\Btheta), \frac{\sigma^2}{2} + v_{\idxt}^2(\Btheta)\right) 
- \pi^2(\Btheta) \left[\Normal(m_\epsilon \cond m_{\idxt}(\Btheta), \sigma^2 + v_{\idxt}^2(\Btheta))\right]^2,
\end{align}
where $\sigma^2 = \sigma^2_{\epsilon} + \sigma^2_n$. 
%
In addition, the expected integrated variance acquisition function can now be written 
\begin{align}
L_{\idxt}(\Btheta^{*})  
%
&= \int_{\Theta} \pi^2(\Btheta) \left[ \frac{\Normal\left(m_\epsilon \mcond m_{\idxt}(\Btheta), \frac{\sigma^2}{2} + v_{\idxt}^2(\Btheta)\right)}{2\sqrt{\pi \sigma^2}} 
%
- \frac{\Normal\left(m_\epsilon \mcond m_{\idxt}(\Btheta), \frac{\sigma^2 + v_{\idxt}^2(\Btheta) - \Deltav_{\idxt}^2(\Btheta,\Btheta^{*})}{2}\right)}{2\sqrt{\pi}\sqrt{\sigma^2 + v_{\idxt}^2(\Btheta) + \Deltav_{\idxt}^2(\Btheta,\Btheta^{*})}} \right] \ud \Btheta, 
\label{eq:expintvar_gaussian}
\end{align}
where the computations follow similarly as in the proof of Proposition \ref{prop:expintvar} and by applying Gaussian identities to compute the integrals. (Details are left to the reader.)

The advantage of this approach is that one avoids Owen's t-function evaluations. However, we found determining the two threshold values $m_{\epsilon}$ and $\sigma_{\epsilon}^2$ more challenging than setting the threshold $\epsilon$ for the uniform threshold and thus focused on the latter approach. 
On the other hand, while running the simulation model typically dominates the total computational cost, these formulae may be useful in high-dimensions where the global optimisation of the acquisition function can also be costly.

\label{lastpage}
\end{document}